\newlist{enuminline}{enumerate*}{1}
\setlist[enuminline,1]{label=\itshape\alph*\upshape)}
\newcommand{\Email}[1]{\href{mailto:#1}{\nolinkurl{#1}}}
\newtheorem{example}{Example}
\newtheorem{remark}{Remark}
\newtheorem{lemma}{Lemma}
\newtheorem*{lemma*}{Lemma}
\newtheorem{theorem}{Theorem}
\newtheorem{assumption}{Assumption}
\newtheorem{definition}{Definition}
\newtheorem{cor}{Corollary}
\DeclareMathOperator*{\argmax}{arg\,max}
\DeclareMathOperator*{\argmin}{arg\,min}
\DeclareMathOperator*{\Prob}{\mathbb{P}}
\DeclareMathOperator*{\E}{\mathbb{E}}
\newcommand{\Dscr}{\ensuremath{\mathcal D}}
\newcommand{\Rscr}{\ensuremath{\mathcal R}}
\newcommand{\Abb}{\ensuremath{\mathbb A}}
\newcommand{\Hbb}{\ensuremath{\mathbb H}}
\newcommand{\Nbb}{\ensuremath{\mathbb N}}
\newcommand{\Rbb}{\ensuremath{\mathbb R}}
\newcommand{\Sbb}{\ensuremath{\mathbb S}}
\newcommand{\Xbb}{\ensuremath{\mathbb X}}
\newcommand{\Zbb}{\ensuremath{\mathbb Z}}
\newcommand{\ind}{\ensuremath{\mathbbm 1}}
\colorlet{responsecolor}{Blue}
\definecolor{shadecolor}{gray}{0.95}  
\title{Optimizing Audio Recommendations for the Long-Term:\\ A Reinforcement Learning Perspective}
\author[1]{Lucas Maystre}
\author[1,2]{Daniel Russo}
\author[1]{Yu Zhao}
\affil[1]{\footnotesize Spotify}
\affil[2]{\footnotesize Columbia University}
\date{\today}
\begin{document}




\maketitle
\begin{abstract}
We present a novel podcast recommender system deployed at industrial scale. This system successfully optimizes personal listening journeys that unfold over months for hundreds of millions of listeners. In deviating from the pervasive industry practice of optimizing machine learning algorithms for short-term proxy metrics, the system substantially improves long-term performance in A/B tests. The paper offers insights into how our methods cope with attribution, coordination, and measurement challenges that usually hinder such long-term optimization. To contextualize these practical insights within a broader academic framework, we turn to reinforcement learning (RL). Using the language of RL, we formulate a comprehensive model of users' recurring relationships with a recommender system. Then, within this model, we identify our approach as a policy improvement update to a component of the existing recommender system, enhanced by tailored modeling of value functions and user-state representations. Illustrative offline experiments suggest this specialized modeling reduces data requirements by as much as a factor of 120,000 compared to black-box approaches.
\end{abstract}

\section{Introduction}
Recommendation systems are an essential component of modern online platforms. They help individuals find candidates to interview for job postings, dating partners, products to try, or media to engage with. These individuals, called ``users'', typically have recurring relationships with recommendation systems, interacting with them repeatedly over extended periods. The systems themselves are generally powered by highly sophisticated machine learning algorithms. However, due to the challenges of measuring long-term outcomes, attributing them to specific recommendations, and coordinating improvements across large, decentralized systems, these algorithms are almost invariably trained to optimize short-term metrics. They are then deployed to govern recurring user interactions, despite the mismatch between their optimization criteria and the long-term nature of user engagement. This mismatch is widely recognized \citep[see e.g.][]{wu2017returning}.

 At Spotify, a leading audio streaming service, we have implemented a novel approach to podcast recommendations that successfully optimizes for outcomes of long-term user journeys. This approach has shown immense impact, now powering recommendations for hundreds of millions of users worldwide. In one A/B test, our method increased average listening time attributable to recommendations by 81\% for affected users over a 60-day period. A larger-scale experiment demonstrated significant improvements in overall app-level outcomes, despite altering only a small component of the app. The magnitude of improvement, compared to myopic approaches pervasive in industry, represents a significant learning. It is suggestive of massive untapped potential of optimizing for long-term user satisfaction in recommendation systems, more broadly.

This paper not only describes our real-world implementation at Spotify and  shares key learnings from both online and offline testing, but also provides a rigorous interpretation by drawing precise connections to reinforcement learning (RL). RL gives a formal language for studying the problem of learning across users to optimize recurring interactions with individual users (See Figure \ref{fig:bandits-vs-rl}). Our formulation focuses on an ``offline RL'' problem, in which the objective is to use historical data to implement a policy improvement update\footnote{A policy improvement update is also known as a policy iteration update.}  to a single component of the recommendation policy. This component represents specialized logic that powers certain podcast recommendations. In the (idealized) problem setting displayed in Figure \ref{fig:bandits-vs-rl}, one could imagine gathering data on repeated interactions with past users and deploying the resulting policy to govern interactions with future users. 

The key to policy improvement is the estimation of a state-action value function, dubbed a ``$Q$-function''. In this context, the $Q$-function quantifies how short-term deviations from the incumbent recommendation policy impact the ``long-term expected reward'' associated with a user's future app interactions. 
\begin{figure}[t!]
    \includegraphics[width=\textwidth]{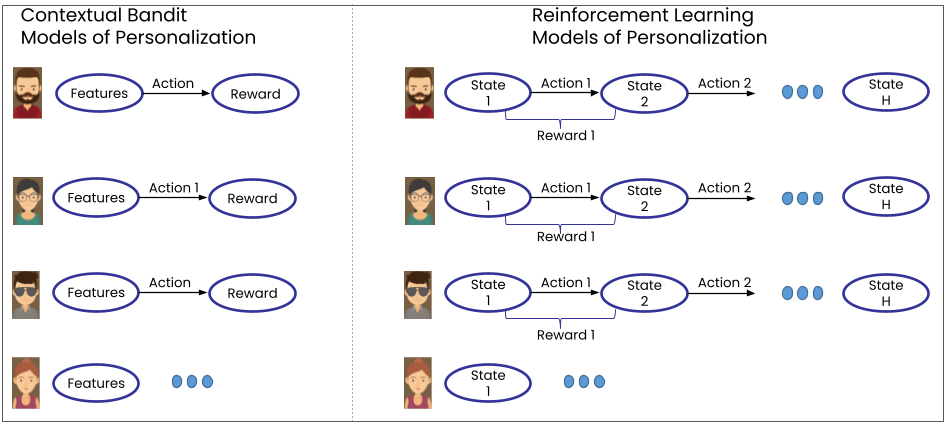}
    \caption{Explaining RL models of personalization, by contrast with contextual bandit models: In both types of models, the goal is to learn from interacting with users how to optimize interactions with future users. Contextual bandit algorithms learn to optimize the immediate reward accrued from a recommendation action \citep{li2010contextual}. RL models aim to optimize a sequence of interactions with an individual user, acknowledging that recommendation decisions in one period can impact the efficacy of recommendations in future periods.}
    \label{fig:bandits-vs-rl}
\end{figure}

Our approach to modeling the $Q$-function is driven by a key hypothesis: recommendations significantly contribute to long-term user satisfaction by fostering the formation of specific, recurring engagement patterns with individual pieces of content, which we call ``item-level listening habits''. In the context of audio streaming, these habits can manifest in various forms: a user might develop a regular routine with a specific podcast show, returning to it as new episodes are released; form an attachment to a particular creator's content; or integrate a curated playlist into their daily activities. Here, we use the term ``item'' to refer to any distinct piece of content that can be recommended, such as a podcast show\footnote{A nuance is that, in some of our implementations, the recommendations advertise specific podcast episodes, but the habits we subsequently track are with the parent show.}, an album, or a playlist. For instance, consider the hypothetical 60-day user journey in Figure \ref{fig:user-journey}. The user's journey with the app as a whole is quite complex, but repeated interactions with the show ``Podcast X'' seem intrinsically linked. In this fictitious journey, individual recommendations not only lead to immediate engagement but also help a user discover a podcast show they return to regularly, becoming part of their routine.  Collectively, these individual habits coalesce to define the user's overall engagement pattern with the platform. This hypothesis underpins our entire modeling approach and distinguishes it from more generic RL approaches.
\begin{figure}\centering
    \includegraphics[width=6in]{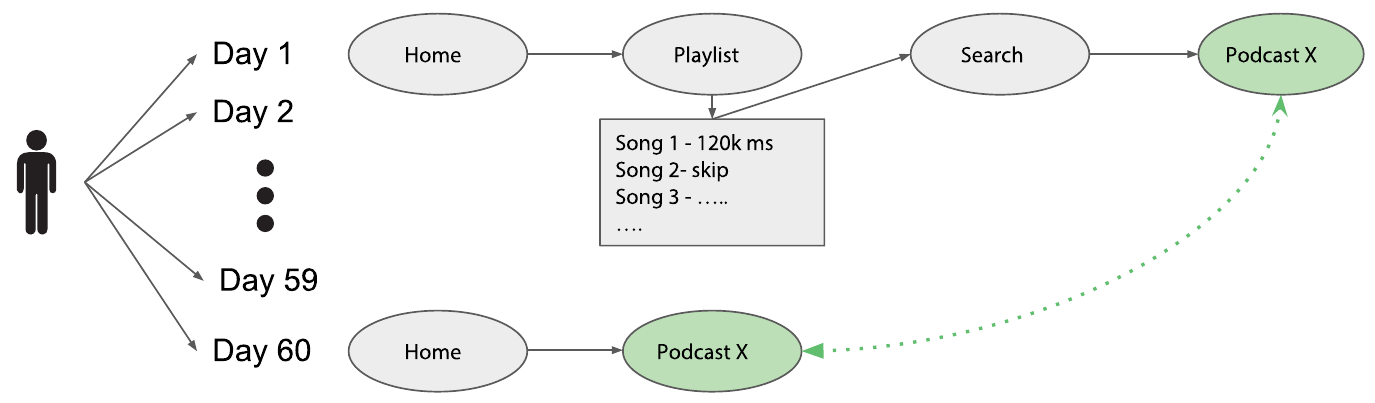}
    \caption{A depiction of a possible trajectory of user interactions over 60 days. Highlighted in green on the first day is an interaction when the user searches for the term `podcast' and receives personalized recommendations through which  they discover \emph{Podcast X}, a hypothetical podcast show that releases new episodes on a regular cadence. Subsequent recommendations resurface the show, and sixty days later the user has formed a deep connection and is still listening to \emph{Podcast X}. }
    \label{fig:user-journey}
\end{figure}

Motivated by this hypothesis, we tailor $Q$-function modeling to capture how a recommendation of a specific item, to a specific user, helps foster a satisfying listening habit with the item. This modeling involves three new contributions:
\begin{enumerate}
    \item We introduce a pragmatic user-state representation that is tailored to this hypothesis. A novel and  critical component of this representation is something we call a ``content-relationship state'', offering a compressed encoding of the user's listening history with any particular piece of content (called an ``item'').
    \item We introduce item-level value functions, dubbed `stickiness models'. These stickiness models project the strength of a user's long-term listening habit with a specific item.
    \item We introduce structural assumptions under which the $Q$-function decomposes conveniently, enabling us to break the complex task of long-term optimization into more manageable, item-specific predictions. In particular, the long-term benefit of a recommendation is broken up into a) modeling the user's direct response to the recommendation in the short-term, b) calculating the transition in the users' content-relationship state with that item following different levels of immediate engagement, and c) calculating how this state transition changes stickiness predictions for that user and \emph{that specific item}.
\end{enumerate}
Rather than directly making predictions about complex app-level interactions that are only loosely connected to the recommendation, all components of this logic focus on future engagement with the recommended item itself. In practice, we rely on pre-existing, highly optimized models that predict short-term engagement based on the system's understanding of the user's unique tastes, the item's unique appeal, and the recommendation context. The logic above allows to augment these pre-existing models with newly trained stickiness models. Intuitively, this augmentation proves to be so impactful due to a crucial mismatch: the podcast shows a user is most likely to try (i.e., listen to once) are often not the ones they are most likely to subsequently stick with (i.e., listen to on a recurring basis). Hence, optimizing for the long-term produces differentiated recommendations, resulting in much better long-term performance.

\subsection{Broader insights into the challenges solutions must overcome}
\label{subsec:challenges}
While our methodology has proven highly impactful, optimizing for long-term outcomes in large-scale recommendation systems presents significant challenges. Section \ref{subsec:challenges-revisited} offers insights into how our implementations succeed in the face of notable attribution, coordination, and measurement challenges.

\begin{description}
\item[Measurement.]
A poor signal-to-noise ratio makes measurement challenging.
Consider some very holistic outcome of the 60 day user-journey, like whether the user remains a subscriber after 60 days, or the total number of minutes they listen.
One approach to unbiased measurement is to inject randomness into recommendation decisions and
then assess whether users who received certain recommendations by chance tend to have ``better'' outcomes.
The signal-to-noise ratio is very poor, however, both because user-behavior is inherently noisy and because a single recommendation is a tiny part of their overall 60 day experience.

\item[Attribution.]
After observing a positive 60 day trajectory, it is difficult to assign credit to individual actions in a coherent way. Should a user's experience listening to \emph{Podcast X} be credited to the recommendation that first led to the discovery, to decisions to recommend that item again later, or, more ambitiously, to the personalized playlist that preceded its discovery?
A subtlety is that whether a decision is effective at this point in time may depend on how decisions will be made at other points in the future.
Recommending the user try \emph{Podcast X} for the first time might only be highly valuable if future recommendations are likely to resurface that content later.

\item[Coordination.]
Notice that in Figure \ref{fig:user-journey} the user seems to transition through many different kinds of experiences.
They see recommendations on the home page, but also see personalized results on the search page.
Some of their interactions may be with a personalized playlist; podcasts like \emph{Podcast X} may be embedded within certain mixed-audio playlists.
Distinct, focused, teams optimize each such experience for immediate outcomes, like whether a user streams after a search result.
Optimizing for the longer-term seems to require coordination among all such teams.
\end{description}

We have found that our structured modeling was essential to estimating the $Q$-function from historical (i.e. ``offline'') data in  a reasonably data-efficient manner, succeeding in the face of measurement challenges. 
Illustrative experiments suggest this reduced sample size requirements by as much as a factor of 120,000 compared to more generic approaches (Section \ref{sec:samplesize}). 

The issue is that user behavior on recommender systems is highly idiosyncratic, enough so that the impact of substantial, persistent changes to recommendations, as measured through large-scale A/B tests, is often indistinguishable from the natural ``noise'' in user behavior. A $Q$-function aims to measure a comparatively tiny impact---that of deviating from the incumbent policy for a single recommendation---on some long-term app-level reward metric (e.g. total listening minutes across months).  To draw an analogy, this is like aiming to measure the impact of a single unhealthy meal on overall lifetime health (e.g., quality-adjusted life years), toward the goal of improving a diet routine.  This poor signal-to-noise ratio is a feature of recommender systems, broadly, and made it especially difficult to apply more black-box RL approaches that have been successful in other domains, like arcade games or robotics. 

Our practical solutions instead leverage domain knowledge about the mechanism through which recommendations contribute to long-term user satisfaction (the formation of item-level habits). The nature of this solution may offer valuable insights about the kinds of approaches one might need to uncover (automatically or manually) in complex, real-world recommendation systems.

\subsection{Outline}
The remainder of this paper is organized as follows.
Section \ref{sec:literature} provides a review of related literature.
Section \ref{sec:formulation} presents a fairly generic RL model for optimizing recurring user interactions. 
Sections \ref{sec:structural}-\ref{subsec:challenges-revisited} are the core contributions of the paper.
First, Section \ref{sec:structural} introduces our domain-specific modeling of the $Q$-function, including our novel user-state representation and structural assumptions.
Section \ref{sec:prototypes} details our real-world implementations and their impact, presenting results from online and offline experiments.
Section \ref{subsec:challenges-revisited} revisits the challenges introduced in Section \ref{subsec:challenges}, explaining how our approach copes with them, and includes an empirical analysis of data efficiency.

\section{Related literature}\label{sec:literature}

\paragraph{Recommender systems.}
Recommender systems emerged alongside the World Wide Web in the 1990s, helping to filter and personalize the massive volume of content available.
Recommendation strategies are often divided in to three categories: content-based, collaborative filtering, and hybrid.
A content-based  approach relies on observable properties of  items (e.g., their text description) and recommends items to users that are similar to ones they engaged with previously \citep{mooney2000content}.
Collaborative filtering approaches instead use historical data on interactions between users and items to make recommendations \citep{su2009survey}.
Matrix factorization techniques \citep{koren2009matrix} are a popular collaborative filtering approach.
Hybrid approaches aim to mix both sources of information; see also
\cite{ansari2000internet} for a Bayesian preference model that allows many types of information to be leveraged.

Today, deep learning based systems are dominant in industrial recommender systems \citep{hidasi2016session, wang2021survey}.
In addition to their ability to fit complex nonlinear patterns, a critical advantage of such methods is their ability to incorporate and synthesize distinct sources of information.
See \citet{steck2021deep} for a critical evaluation of the impact of deep learning at Netflix.

A great deal of recent research aims to make recommender systems responsive to users' most recent interactions \citep{beutel2018latent, wang2021survey}.
Most such work still optimizes individual recommendations myopically, and is complementary to efforts to optimize sequences of recommendations for the long term.
Below, we describe several threads of research related to optimizing user interactions for the long-term.

\paragraph{Surrogate outcomes and proxy-metrics.}
Recognizing the drawbacks of optimizing recommendations only for immediate engagement, many have tried to develop proxy metrics that are better aligned with the long-term.
This approach is common in many fields---not just recommender systems.
For instance, when the delay involved in measuring long-term treatment effects is too severe, clinical trials often determine success based on surrogate endpoints \citep{prentice1989surrogate}.
A surrogate endpoint is typically a health indicator (e.g. blood pressure) that is expected to mediate a treatment's effect on the true outcome of interest (e.g. whether the patient suffers a heart attack.)

The use of surrogate or proxy metrics is common in recommender systems.
For instance, \cite{zhou2010solving}, \cite{ziegler2005improving} and \cite{anderson2020algorithmic} study the benefits of diversity in recommendations.
\cite{wang2022surrogate} recently studied five surrogate outcomes at a large online video streaming service.
These are mathematical functions, computed based on user interactions, which represent Diversity, Repeated Consumption, High-Quality Consumption,  Persistent Topics, and Page-Specific Revisits.
They improve some longer term outcomes by training the machine learning system to optimize for these metrics, rather than just immediate engagement.

Several papers have proposed multi-armed bandit models where surrogate outcomes encode actions' long-term impacts.
These include bandit models where poor recommendations cause attrition \citep{ben2022modeling, bastani2022learning} and bandit models where objectives incorporate diversity/boredom considerations \citep{xie2022multi, cao2020fatigue, ma2016user}.
\cite{wu2017returning} studies a variation on typical bandit model where actions impact whether a user will return to the system.
\cite{yang2020targeting} combine bandit-style exploration with long-term surrogate models to optimize promotional discounts at the Boston Globe, a prominent newspaper.

Surrogacy assumptions are similar to Markov assumptions in RL. One of our main assumptions in Section \ref{sec:structural} can therefore be viewed as surrogacy assumption.
Approaches that combine several surrogate outcomes into a scalar surrogate index \citep{athey2019surrogate} are intellectually similar to RL approaches that fit an approximate value function and then choose actions to increase value-to-go.
There tends to be a distinction in terms of granularity, however.
Most RL implementations use high dimensional state variables and fit value function approximations with neural networks, rather than handcraft a small number of surrogate outcomes.

\paragraph{RL for optimizing a recommendation systems.}
In recent years there have been many impressive demonstrations of algorithms that synthesize reinforcement learning principles with deep learning.
Recommender systems would seem to be a natural application of these techniques:
they generate massive volumes of data, interact sequentially with users, and deploy neural networks to optimize each  interaction.
Early proposals to view recommendation as an RL problem \citep[see, e.g.,][]{shani2005mdp} have been revisited with renewed interested in the past few years.
\cite{afsar2021reinforcement} provide a sweeping survey of over a hundred conference and workshop papers on this topic.
Unfortunately, numerous challenges can inhibit deployment reinforcement learning in the real world \citep{dulac2021challenges}.

We will focus our discussion in this section on a few high level themes in the literature, aiming to highlight the primary distinguishing features of our work. One segment of this literature uses RL to address single-period decision-making problems with combinatorial decision-spaces.
For instance, consider the problem of selecting a full page or `slate' of recommendations.
This can be broken down, via dynamic programming, into the problem of selecting the first item, then the second, and so on.
See \cite{zhao2018deep,xie2021hierarchical} or \cite{netflix2022budget} for recent works along these lines.
This thread of work is not so closely related to ours.
Another segment of work tries to use model-based RL \citep{hu2017playlist,chen2019generative, shang2019environment}.
Since user behavior is notoriously complex and idiosyncratic,  most papers instead try to directly estimate a few relevant quantities of interest from logged data, like value functions or policy gradients.
We focus most of our discussion on these works.

Several papers propose to optimize sequential user interactions by fitting---and maximizing---approximate $Q$-functions.
\cite{zou2019reinforcement} describe such an approach for the problem of optimizing a sequence of product recommendations appearing in a feed.
\cite{zheng2018drn} looks at optimizing a sequence of news article recommendations.
\cite{xin2020self} proposes a method that is similar to  an actor-critic algorithm, but with an augmented self-supervised loss.
\cite{chen2022off} describe a successful implementation at YouTube of an actor critic algorithm for candidate generation\footnote{As is common, their system relies on two separate pieces of logic: a candidate generation algorithm which filters a corpus with millions of items down to hundreds and a ranking algorithm which orders those on the page.}.
Actor-critic algorithms fit approximate $Q$-functions and use them to estimate directions for local policy improvements \citep{konda1999actor,sutton1999policy}.
\cite{chen2019top}  had previously applied the REINFORCE algorithm \citep{williams1992simple} to this problem.

Another example is provided by \cite{ie2019reinforcement}. Again, they test an RL based recommendation algorithm in live experiments at YouTube.
They focus on the problem of optimizing the total engagement time before a recommendation session ends.
To our understanding, a session begins when a user clicks on a video from the homescreen and ends when they exit the app or return to the homescreen.
In the interim, whenever a video is completed a full slate of other videos is recommended.
\cite{ie2019reinforcement} fits an approximate $Q$-function and then recommend slates that maximize it.
To make the problem tractable, they assume the $Q$-function decomposes additively across items and modify RL iterations to fit this assumption.
That work also contains many helpful discussions of engineering efforts around serving, logging etc. and develops a package for conducting simulation experiments \citep{ie2019recsim}.

Our paper has much in common with these works.
We are motivated by similar pitfalls of myopic recommendation strategies, we grapple with the problem of defining states and rewards, and we estimate $Q$-functions from offline data.
Like \cite{chen2019top,chen2022off}, our modeling decisions were heavily influenced by the need to work within a complex industrial scale system.

Our paper is distinct in using domain-specific modeling to enable optimizing for sustainable listening habits that take months to unfold.
Related papers appear to consider much shorter planning horizons.
\cite{ie2019reinforcement} optimize performance over a single viewing session.
\cite{zheng2018drn} and \cite{zou2019reinforcement} use a discount factors of 0.4 and 0.9, respectively, in problems where one recommendation is made per time period. Implicitly, they optimize performance over a short sequence of recommendations.
Optimizing instead over a long horizon greatly exacerbates the coordination and measurement challenges highlighted in Section \ref{subsec:challenges}.
As a result, we cannot directly apply unstructured actor-critic algorithms as in some of the aforementioned papers (see Section \ref{sec:samplesize}).
To overcome this challenge, we make structural assumptions motivated by domain-knowledge and design new features of user/item representations that are tailored to creating sustainable item-level habits.
The modeling in Section \ref{sec:structural} appears to be quite different from past work.

\cite{besbes2016optimization} study a non-myopic approach to news article recommendation.
They characterize content along two dimension: ``clickability, the likelihood to click to an article when it is recommended; and (2) engageability, the likelihood to click from an article when it hosts a recommendation.''
They formulate the problem of maximizing clicks across an entire visit to the system, and propose a one-step look-ahead policy that balances immediate click-through rate and engageability.
Their introduction of a new engageability metric is reminiscent of our introduction of item stickiness models.

\paragraph{Other MDP models of recurring customer interactions.}

Markov decision process (MDP) models of recurring customer interactions are almost as old as the formal study of MDPs themselves. \cite{howard2002comments} comments that his invention of policy iteration in 1960 was inspired by the real-life application of MDPs in optimizing catalog mailing policies at Sears, Roebuck and Company. See \cite{gonul1998optimal, simester2006dynamic} or \cite{gonul2006compute} for more recent published work along those lines.
More recently, but still over two decades ago, \cite{pfeifer2000modeling} advocated for the use Markov chain models of customer lifetime value, and MDP models of managing customer relationships.
\cite{rust2006marketing} calls optimizing dynamic marketing interventions for the increase in customer lifetime value the `Holy Grail' of customer relationship management.
The use of hidden Markov models of latent customers relationship states is also well established in the academic marketing literature \citep{netzer2008hidden, montoya2010dynamic, abhishek2012media, ascarza2013joint, liberali2022morphing}.

One major dimension along which we differ from this work is the granularity of personalization.
Most aforementioned papers deal with simple structured problems  \citep[e.g.][]{gonul1998optimal} or problems where user states can take on only a few possible values \citep[e.g.][]{netzer2008hidden} that make dynamic programming or hidden Markov modeling tractable and interpretable.
Successful recommendation systems need a very refined understanding of user tastes and item's appeal, and modern ones do this by training neural networks on billions of past user interactions.
Our work integrates the logic of MDPs into such a system, while retaining the richness and complexity of the real world problem.

\section{An RL Model of our objective: using historical data to  improve a component of a recommendation policy}
\label{sec:formulation}

\begin{figure}[t]
\centering
\begin{subfigure}{.4\textwidth}
    \centering
    \caption{Home banner}
    \vspace{3mm}
    \label{fig:screenshot-banner}
    \includegraphics[width=.7\linewidth]{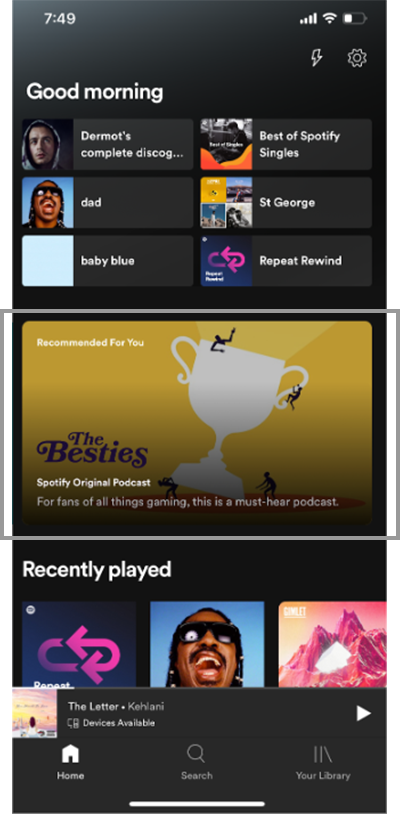}
\end{subfigure}
\begin{subfigure}{.4\textwidth}
    \centering
    \caption{Podcast discovery shelf}
    \vspace{3mm}
    \label{fig:screenshot-shelf}
    \includegraphics[width=.7\linewidth]{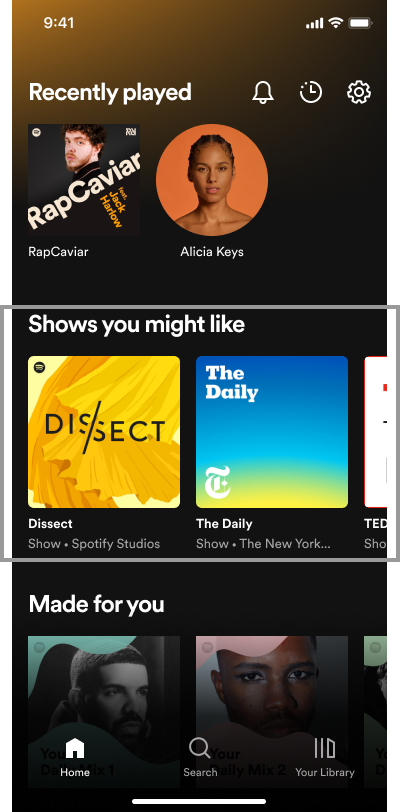}
\end{subfigure}
\caption{Screenshots of Spotify mobile application.
The banner component displays a single content item, whereas the podcast discovery shelf contains up to 20 cards that the user can scroll through.
}
\label{fig:screenshots}
\end{figure}

This section formalizes our main problem: using offline data to improve a component of the incumbent recommendation policy, aiming to enhance the total reward associated with users' recurring interactions with the system. We use the language of RL to model these interactions, culminating in the definition of an appropriate $Q$-function in \eqref{eq:partialQ}. This function quantifies how short-term deviations from the incumbent recommendation policy in a specific part of the app alter long-term performance. We aim to estimate this function using historical trajectories of user-states and recommendation actions, along with a reward function (which quantifies the desirability of outcomes from recommendations), and without estimating or specifying other components of the theoretical model.

Our practical implementations, derived through tailored, domain-specific modeling of these $Q$-functions, are presented in subsequent sections. Readers more interested in these  aspects may wish to focus on the latter part of this section, particularly the definition of the $Q$-function, while skimming other parts to grasp important notation.

\paragraph{A specific recommendation task to motivate abstract modeling.}
The solutions we derive have been implemented in several parts of the Spotify app, at scale. To explain our modeling choices, it is helpful to describe a specific recommendation task which has an especially clean structure.  This task is the basis for one of the A/B test described later in Section \ref{sec:prototypes}. 
Figure \ref{fig:screenshot-banner} shows a screenshot of the Spotify mobile app. At the very top are shortcut icons, making it easy for the user to return to previously listened to albums, playlists, or podcasts.
Just below that is a banner, which prominently recommends a particular podcast to the user.
The user may choose to scroll past the banner, encountering further personalized recommendations.
Through search results or playlists---including mixed-audio playlists which incorporate talk audio---they may encounter personalized recommendations of various items.
Often, when a user opens the app there will be no banner present. We will consider the problem of using historical (i.e. ``offline'') data to improve the policy that governs such banner recommendations---a select component of the broader recommendation policy in use at the system. 

\paragraph{Discrete time-period modeling of sequential of recommendations, engagement, and rewards.}
 We use the language of reinforcement learning to model a specific (randomly selected) user's recurring interactions with the app. In our model, the interactions occur across a sequence of discrete periods, denoted by $t$, which we think of as separate days. This timescale reflects our primary interest in modeling impact over a long time horizon---like multiple months---rather than creating a system that changes rapidly in response to user behavior throughout a given day.  We model retention as exogenous\footnote{It is possible to represent an endogenous decision to engage with no content, but we model de-activation as exogenous to limit the anticipated length of user journeys in the theory.}: the user becomes active in some period $T_0$ and, in each subsequent period, the user has a fixed probability $\gamma \in (0,1)$  of deactivating, independent of their app interactions.  We denote the de-activation period by $T_1$. The scalar $\gamma$ acts as an implicit discount factor. 

\begin{table}[t]
  \caption{
Symbols and notation introduced in Section~\ref{sec:formulation}. }
  \label{tab:formulsymb}
  \centering
  \begin{tabular}{lll}
  \toprule
  Symbol             & Domain                            & Description \\
  \midrule
  $T_0$              & $\Zbb$                            & Period at which user creates an account \\
  $T_1$              & $\Zbb$                            & Period at which user deactivates their account \\
  $\gamma$           & $(0, 1)$                          & Exogenous, fixed retention rate \\
  $X_t$              & $\Xbb$                            & Context of user at period $t$ \\
  $L$                & $\Nbb$                            & Number of recommendations per period \\
  $A_t$              & $\Abb^L$                          & Recommendations exposed to user at period $t$ \\
  $Y_t$              & $\Rbb_{+}^L$                      & Consumption of user at period $t$ \\
  $S_t$              & $\Sbb \subseteq \Xbb \times \Hbb$ & State of user at period $t$ \\
  $\pi(\cdot)$       & $\Sbb \to \Abb^L$                 & Recommendation policy \\
  $R(\cdot, \cdot)$  & $\Rbb_+^L \times \Abb^L \to \Rbb$ & Reward function, applied to $(Y_t, A_t)$ \\
  $\pi^0(\cdot)$     & $\Sbb \to \Abb^L$                 & Incumbent recommendation policy \\
  $\star$            & $[L]$                             & Recommendation position under consideration \\
  $\pi_\star(\cdot)$ & $\Sbb \to \Abb_\star$             & Policy component for position under consideration \\
  $Q_{\pi^0}(s,a)$ & $\Sbb \times \mathbb{A}_\star \to \mathbb{R}$             & Partial state-action value function given in \eqref{eq:partialQ}.  \\
  \bottomrule
\end{tabular}

\end{table}

At the start of each period (or ``day'') $t \in \{T_0, \ldots, T_1\}$, the system makes multiple recommendations ($L$ per day), denoted $A_t=(A_{t,1},\ldots, A_{t,L}) \in \Abb^L$, and these persist until the next period begins. It is helpful to think of $A_t$ as being arranged in a pre-specified vertical order, like a static one-dimensional version of the page in Figure \ref{fig:screenshots}.   Due to this rough analogy, we call $\ell \in [L]=\{1,\cdots, L\}$ a \emph{position}. 
We think of the number of positions $L$ as being a large number, with the user skipping over most recommendations without much thought. This abstraction reflects that users may encounter recommendations in varied ways throughout the day: they might see a banner recommendation, scroll through personalized lists, or encounter recommendations in search results or playlists. The same item may be recommended at multiple positions. 

The user engages with the recommendation in position $\ell$ for $Y_{t,\ell}\in \mathbb{R}_+$ seconds on the $t^{\rm th}$ day, with  $Y_{t,\ell}=0$ indicating the item is immediately skipped.
Let $Y_{t}=(Y_{t,1},\cdots, Y_{t,L})$.
We assume the user does not listen to items that are not recommended\footnote{This assumption is more innocuous than it may appear at first. In our model, a large number of recommendations per day (i.e. $L-1$) are governed by an incumbent policy and we are constrained to only modify the policy component that governs a single recommendation. 
Search results can be thought of an extension where a user provides a query which seeds some components of these $L-1$ recommendations.
Adding user queries to the model seems to complicate the presentation without yielding new insights.}.

The holistic reward function $R: \mathbb{R}_+^L \times \Abb^L \to \mathbb{R}$ associates a user's daily interactions with a measure of success. As simple examples, $R(Y_t, A_t) =\sum_{\ell=1}^{L} Y_{t,\ell}$ rewards total engagement time throughout the day whereas $R(Y_t, A_t) =\ind\left(\sum_{\ell=1}^{L} Y_{t,\ell}>0\right)$ assesses whether the user engaged with some recommended content that day. 

\paragraph{Theoretical generative model of user behavior.} We now state an abstract model of user behavior. The purpose is to ensure our mathematical statements are completely rigorous (e.g., expectation operators are well-defined). Beyond this, it does not play a significant role in our practical solutions.

First, we assume user transitions through a sequence of  contexts $(X_t : t\in \{T_0, \cdots, T_1\} )$  representing factors like a user's age, device, or the day of the week. Assume this follows an exogenous Markov chain, and that, conditioned on $X_t$, $X_{t+1}$ is independent of $A_t$ and $Y_t$ as well as all prior contexts, recommendations, and streaming decisions.

The user's response to recommendations $A_t$ is influenced by their history prior to day $t$, $H_{t} = \{ (X_{\tau}, A_{\tau}, Y_{\tau}) :  \tau \in \{T_0, \cdots, t-1\} \} \in \mathbb{H}$, the context $X_t$, a latent and unobservable type of the user $\omega \in \Omega$, and an unobservable shock $\xi_t  \in \Xi$ drawn independently and identically across periods and independently from all else.
Formally, there is some fixed function $\psi$ that determines the user response  as
\[
Y_t=\psi(H_t, A_{t}, X_t, \omega, \xi_{t}).
\]
The variable $\xi_{t}$ is meant to capture idiosyncratic randomness in user behavior.
The latent type $\omega$ is meant to model persistent user preferences that are unknown to the recommender but might be partially inferred from their behavior.
Since the user is randomly drawn from a population, $\omega$ is a random variable.
The dependence of future behavior on a user's history means that recommendations in one period can influence the efficacy of recommendations in the future.

\paragraph{Recommendation policies and  user lifetime reward.} 
A policy is a (possibly randomized) rule that determines the $L$ recommendations as function of the state of the user $S_t$.  For now, $S_t = (X_t, H_t)$
denotes the exhaustive list of all information the system has about the user's previous interactions  and context\footnote{Viewing the state variable as reflecting the state of the decision-maker's knowledge is classical in treatments of partially observable Markov Decisions Processes \cite[][Chapter 4]{bertsekas2012dynamic}; See \citet[][Chapter 17.3]{sutton2018reinforcement} or \citet{lu2021reinforcement} for a discussion of encoding the history as a state variable. The recommender's action $A_t$ is necessarily a function of the state variable, and perhaps some exogenous randomness, since it has no other information on which to base its decisions.
This state variable trivially obeys the Markov property $\Prob(S_{t+1} \in \cdot \mid S_t, A_t) =  \Prob(S_{t+1} \in \cdot \mid S_1, A_1, \cdots, S_{t}, A_t)$, since $S_t$ encodes all information in $(A_{1},S_1,\cdots A_{t-1}, S_{t-1})$. }. (Tailored, parsimonious, representations are developed in Section \ref{subsec:pragmatic-user-state}.)  If $t\notin \{T_0,\cdots, T_1\}$, indicating the user is inactive, we write $S_t=\varnothing$. Let $\Sbb$ denote the set of all possible states.

A policy $\pi$ has value function $V_{\pi}:\Sbb \cup \{\varnothing\} \to \mathbb{R}$ defined as
\begin{equation}
\label{eq:value-function}
    V_{\pi}(s) = \mathbb{E}_{\pi}\left[ \sum_{\tau=t}^{T_1}  R(Y_\tau, A_\tau)  \mid  S_{t}=s  \right]  \qquad \text{for all } s\in \Sbb,
\end{equation}
with $V_{\pi}(\varnothing)=0$. The value function can also be written in terms of the sum of discounted rewards over an infinite horizon (Remark \ref{rem:discounting}).  The subscript $\pi$ indicates that actions are selected according to $\pi$. A value function defines a partial order over policies, with one policy outperforming another if it offers higher value-to-go for every possible state. It is sometimes helpful to also consider a scalar objective function,
\begin{equation}\label{eq:avg-value-function}
J(\pi) = \E\left[ V_{\pi}(S_{T_0}) \right],
\end{equation}
which considers the average lifetime reward for a new user.

\paragraph{The incumbent policy and logged data.}
We isolate a specific policy, $\pi^0$, which we call the \emph{incumbent policy}. This represents the status-quo policy in use on the recommender system. It plays two roles. First, it is the natural benchmark against which the performance of other policies is judged. In A/B tests, it governs recommendations for users in the ``control group'. 

Second, it is the policy under which historical data was collected, sometimes called the ``behavioral policy'', or a ``logging policy'' in offline reinforcement learning. In particular, we assume access to a batch of trajectories of user interactions
\begin{equation}\label{eq:user-trajectories}
\mathcal{D} = \{ (S_{t}^u, A_t^{u}, Y_t^{u})  : t\in \{T_0^u, \cdots, T_1^u\} \}_{u \in U },
\end{equation}
ranging over users in a finite set $U$ whose interactions were with  the incumbent policy $\pi^0$.
Our ultimate system will need to involve quantities that can be estimated using such data.

We assume that the incumbent recommendation policy has a nonzero chance of selecting each action in each state. One can interpret this as a source of exploration and unbiased training data.
Much of our motivation for making Assumption \ref{ass:overlap} is expository. See Remark \ref{rem:causal} below.
\begin{assumption}[Randomness in action selection]\label{ass:overlap} The incumbent policy  is a function $\pi^0: \Sbb \times \Xi^{L} \to \Abb^L$ which associates a state in $\Sbb$ and an exogenous noise vector in $\Xi^L$ with a sequence of items.  Assume there is an i.i.d. collection of random variables $\epsilon = (\epsilon_{t, \ell} :  t\in \mathbb{Z}, \ell\in [L] )$ taking values in $\Xi$, such that $A_{t,\ell}=\pi^0_{\ell}(S_t, \epsilon_{t,\ell})$. Moreover,   For each $s\in \Sbb$, $\Prob( \pi_{\star}^{0}(s, \epsilon_{t,\star}) =a_{\star} )>0$ for each $a_{\star} \in \Abb_{\star}$.
\end{assumption}

\paragraph{Improving a component of the policy.}
We focus on a specific component of this incumbent policy, denoted $\pi^{0}_{\star}$, which might represent, for example, the logic governing podcast recommendations on the app's home banner. In particular, $\pi^0_{\star}$ selects among a restricted set of items $\mathbb{A}_{\star}$, a specific item $A_{t,\star}$ to display at position $\star$, on the basis of the current user state $S_t$ (allowing for randomization). 

Our goal, roughly speaking, is to increase the lifetime average reward in \eqref{eq:avg-value-function} by adjusting the recommendation policy at position $\star$.
To describe this concretely, we introduce the partial state-action value function $Q_{\pi^0}: \Sbb \times \Abb_{\star} \to \mathbb{R}$ defined by
\begin{equation}\label{eq:partialQ}
    Q_{\pi^0}(s,a_\star) = \mathbb{E}_{\pi^0}\left[  \sum_{t=0}^{T_1}  R(Y_t, A_t)  \mid S_0 = s, A_{0,\star} =a_{\star} \right].
\end{equation}
This measures the expected remaining lifetime reward when item $a_\star$ is recommended at position $\star$ to a user whose current state is $s$ and the incumbent policy $\pi^0$ is used to determine recommendations at other positions and in future periods. In other-words, this models the long-term, holistic, impact of short deviations from the incumbent policy.

While \eqref{eq:partialQ} appears to consider a single period decision-problem, it can be used to derive policies with superior performance in long-horizon problems.
In particular, the policy iteration update $\pi^+$, defined by $\pi_\star^+(s) \in \argmax_{a_\star \in A_\star} Q_{\pi^0}(s,a_\star)$ and $\pi_\ell^+(s) = \pi^0_\ell(s)$ for $\ell \neq \star$, satisfies $V_{\pi^+}(s) \geq V_{\pi^0}(s)$ for every $s\in \Sbb$   \citep{bertsekas2012dynamic}.
This implies $J( [\pi^+_{\star}, \pi_{\smallsetminus \star}^0]) \geq J( [\pi^0_{\star}, \pi_{\smallsetminus \star}^0])$.  Appendix \ref{sec:policy_improvement_theory} reviews theory that lets us interpret this policy improvement process as a \emph{coordinate ascent} method based on the gradient  of $J$, usually called  the ``policy gradient''. 

We aim to approximate $\pi^+_\star$ using logged data $\Dscr$.  Since the $Q$-function in \eqref{eq:partialQ} is defined by an expectation under the incumbent policy, there is hope of approximating it using the logged data $\Dscr$. In theory, this policy improvement process would eventually produce the true optimal policy if were conducted repeatedly, though separately, on each policy component. Here we are instead satisfied with a single policy update, producing a (substantial) improvement to the current status-quo.

\begin{remark}[Implicit discounting]\label{rem:discounting}
    The formulation defines an interaction protocol that could be used to generate an infinite trajectory $(S_t, A_t, Y_t : t\in \{T_0, T_0+1, T_0+2, \cdots  \})$. By integrating over the random, geometric lifetime $T_1-T_0$, which is independent of this trajectory, one can show that
    \[
    V_{\pi}(s) =  \mathbb{E}_{\pi}\left[ \sum_{\tau=t}^{\infty}  \gamma^{\tau-t} R(Y_\tau, A_\tau)  \mid  S_{t}=s  \right]  \qquad \text{for all } s\in \Sbb.
    \]
    By creating a model in which users deactivate with probability $1-\gamma$ independently in each period, we effectively formulate a problem with a discounted infinite horizon objective.
\end{remark}

\begin{remark}[Causal interpretation of conditional expectations]\label{rem:causal}
    Consider the difference in a user's chance of engaging with the recommendation at position $\star$ under two possible recommendations:
    \[
    \mathbb{P}_{\pi^0}(Y_{t,\star} >0 \mid A_{t,\star} =a_\star, S_t) - \mathbb{P}_{\pi^0}(Y_{t,\star}>0 \mid A_{t,\star}=a'_\star, S_t).
    \]
    These conditional probabilities are well defined due to Assumption \ref{ass:overlap}.
    Because, conditioned on the state, the recommendation $A_{t,\star}$ at position $\star$ is a function of exogenous randomness specific to that decision, the difference above can be thought of as the causal impact of choosing to recommend item $a_\star$ rather than $a'_\star$.
    In particular, in the notation of \cite{pearl2009causal},
    \[\mathbb{P}_{\pi^0}(Y_{t,\star} >0 \mid A_{t,\star} =a_\star, S_t) = \mathbb{P}_{\pi^0}(Y_{t,\star} >0 \mid do\left(A_{t,\star} =a_\star\right), S_t).
    \]
    For this reason, we are able to avoid specialized causal inference notation like that of \cite{pearl2009causal} or \cite{rubin1974estimating} throughout the paper.
    The focus on the event $\{Y_{t,\star} >0\}$ was for concreteness only.
    An analogous statement holds for any event depending only on the user's future recommendations and behavior.
\end{remark}

\section{Domain-specific modeling of the $Q$-function}
\label{sec:structural}

While our discussion of offline $Q$-estimation in the previous section was largely generic, applicable to a wide range of recommendation systems, this section marks a significant shift towards domain-specific modeling. Here, we propose and leverage a hypothesis about the nature of audio listening habits to develop a structural approach for modeling the $Q$-function in audio streaming recommendation systems.

As described earlier in  the introduction, our hypothesis is that a critical way in which recommendations contribute to users' long-term satisfaction is through aiding in the formation of item-level listening habits. Rather than aim to make recommendations that ``increase lifetime reward'' in a black-box way, we specialize our modeling to this specific mechanism. For instance, we posit that a key recommendation might help a user discover a podcast series they return to regularly, or a playlist that becomes part of their daily routine. This notion of `stickiness' – the tendency of  users to form lasting attachments to specific content – forms the cornerstone of our modeling approach.  
To tailor our $Q$-function modeling to this hypothesis, our approach consists of two key components:
\begin{enumerate}
    \item A practical representation of user-state that captures both overall listening preferences and specific content relationships (Section \ref{subsec:pragmatic-user-state}).
    \item Structural assumptions that simplify the $Q$-function, making it more amenable to real-world implementation while focusing on the formation of item-level listening habits (Section \ref{subsec:structured-Q}).
\end{enumerate}
This hypothesis-driven, structural modeling approach offers several advantages over generic or black-box methods. By focusing on a specific mechanism (the formation of item-level habits), we can more precisely measure and optimize for long-term user satisfaction (as measured by ``lifetime reward''). Moreover, the approach improves interpretability and aligns well with organizational structures in large-scale recommendation systems.

Section \ref{subsec:challenges-revisited} touches on how this solution overcomes attribution, coordination, and measurement challenges, including a simple demonstration of how our structural modeling enhances data efficiency.

\subsection{User-state representation}\label{subsec:pragmatic-user-state}
Building on our intuition about item-level listening habits, we now describe a practical representation of user state that captures essential aspects of user behavior and content relationships. Our representation consists of three key components, denoted by $\{u_t, Z_t, X_t\}$:
\begin{enumerate}
    \item \emph{Slow-moving vector representations of user's ``audio tastes''} $u_t$: learned, fixed-length, vectors that we think of as encoding the recommended system's understanding of the user's tastes. For instance, this vector may reflect a user's affinity for ``true crime'' podcasts.  Such representations are core to many recommender systems; we provide further discussion below and in Appendix \ref{sec:background}. 
    \item \emph{Fast-moving context $X_t$:} These features capture immediate contextual factors that might influence a user's short-term behavior, such as time of day, day of week, or recent app activity. These are used in pre-existing systems that predict user's short-term behavior in response to a recommendation.  
    \item \emph{Content-relationship state $Z_t=(Z_{t,a})_{a\in \mathbb{A}}$}: This component captures the user's history with each piece of content. For example, it might encode how recently and how frequently a user has listened to a particular podcast series. This component is a novel feature of our modeling approach.  
\end{enumerate}
Together, these components provide a comprehensive view of the user's state.
The slow-moving taste vectors capture overall preferences, the fast-moving context allows for short-term adaptability, and the content-relationship state enables our model to track and forecast evolving item-level habits.
The content-relationship state is particularly crucial to our approach.
Unlike the user vector $u_t$, which captures overall taste, the content-relationship state $Z_{t+1,a}$ with item $a$ is highly sensitive to whether the user engages with item $a$ today.
This allows us to model the long-term benefit of recommendation actions by modeling user-state transitions. 

\begin{table}[t]
  \caption{
Symbols and notation introduced in Section~\ref{sec:structural}. }
  \label{tab:structsymb}
  \centering
  \begin{tabular}{lll}
  \toprule
  Symbol             & Domain                               & Description \\
  \midrule
  $u_t$              & $\Rbb^*$                             & Slow-moving vector representing user's audio tastes \\
  $X_t$              & $\Xbb$                               & Fast-moving context of user \\
  $Z_t$              & $\Rbb^{\lvert \Abb \rvert \times k}$ & Content-relationship state \\
  $C_{t,a}$          & $\Rbb_+$                             & User's consumption of item $a$ at time $t$ \\
  $f(\cdot, \cdot)$  & $\Rbb^k \times \Rbb_+ \to \Rbb^k$    & update function, computes $Z_{t+1,a}$ from $(Z_{t,a}, C_{t,a})$ \\
  $r(\cdot)$         & $\Rbb_+ \to \Rbb$                    & Item-level reward component, applied to $C_{t,a}$ \\
  \bottomrule
\end{tabular}

\end{table}

Given their central role in our approach, we now provide more detailed explanations of the taste representations and content-relationship states.

\paragraph{Learned embeddings of users and items in ``taste'' space.}
We now explore in more detail the slow-moving vector representations of users' audio tastes mentioned earlier. Vector embeddings of users and items are central to many recommendation systems. These representations are typically trained by solving a surrogate classification problem, where the model learns to predict user interactions with items based on their vector representations. See Appendix \ref{sec:background} for an overview of influential work by \cite{covington2016deep}, which illustrates this approach.

We continue to denote an item (such as \emph{Podcast X}) by $a$ and denote the corresponding item vector by $\nu_a \in \mathbb{R}^d$. We denote the corresponding user vector at period $t$ by $u_t \in \mathbb{R}^d$. Both user and item vectors reside in the same $d$-dimensional space, allowing for direct comparisons of users and items.
We think of $u_t$ as encoding the recommender system's understanding of the user's tastes, and the dot product 
\begin{align}
  \label{eq:affinity}
  \mathrm{affinity}(u,a) = u_t^\top \nu_a
\end{align}
as encoding the propensity of a user with given tastes to have a short interaction with the item. This vector-based representation allows the recommendation system to efficiently compute and compare user-item affinities, facilitating the selection of items that align with a user's tastes.

\paragraph{Content-relationship states.}

We represent a user's content-relationship state as $Z_t \in \mathbb{R}^{|\Abb|\times k}$, consisting of a $k$-dimensional embedded representation of the user's consumption history for each piece of content. For this to function as a true state variable, we assume it can be updated incrementally:
\begin{equation}
    \label{eq:content-state}
    Z_{t+1,a} = f( Z_{t,a}, C_{t,a} ).
\end{equation}
Here, $C_{t,a}$ denotes the user's consumption of item $a$ at period $t$, defined as $C_{t,a}=\sum_{\ell=1}^{L} Y_{t,\ell} \ind(A_{t,\ell}=a)$, where $Y_{t,\ell}$ represents the engagement level with recommendation $\ell$ (e.g., listening time) and $\ind(A_{t,\ell}=a)$ indicates whether item $a$ was recommended in position $\ell$.

The update function $f$ could potentially be learned by training a recurrent neural network on a surrogate prediction task. However, our prototype implementations have used a simpler handcrafted encoding, described in Example \ref{ex:ema-content-state}. Either way, $f$ is treated as a known function. 

We initialize $Z_{T_0,a} =0 \in \mathbb{R}^k$, and assume that $f$ maps zero inputs to zero outputs, representing no prior engagement with the content. In practice, given the enormous content libraries in most large-scale recommendation systems, we expect a user's content state to be extremely sparse. This sparsity can be leveraged for efficient computation and storage.

\begin{example}[Exponential-moving-average relationship states]\label{ex:ema-content-state}
Our prototype implementations use a simple handcrafted encoding.
We update relationship sates as
\begin{equation}\label{eq:content-state-ema}
Z_{t+1,a} = \alpha \circ Z_{t,a} + (1-\alpha)\circ \ind\left(C_{t,a} >0\right)
\end{equation}
where $\alpha \in [0,1]^k$ is a vector of forgetting factors and $\circ$ denotes componentwise multiplication.
Each element of $Z_{t,a}$ is an exponential moving average of listening indicators. When a varied range of forgetting factors are used, the whole vector encodes the user's level of engagement with the item and how this has changed across time.
\end{example}

\subsection{Additively separable reward functions}

Toward decomposing the complex task of long-term optimization into item-specific reasoning, we consider reward functions which separate additively across items. In particular, the reward functions  we consider are based on a user's engagement or \emph{consumption} in period $t$, defined earlier as $C_{t,a}=\sum_{\ell=1}^{L} Y_{t,\ell} \ind(A_{t,\ell}=a)$.  Our methodology requires that immediate rewards are \emph{additively separable} functions of consumption, as $R(A_t,Y_t)=\sum_{a\in \mathbb{A}} r(C_{t,a})$, so that lifetime rewards decompose as
\begin{equation}\label{eq:separable-rewards}
    \underbrace{\sum_{t=T_0}^{T_1} R(A_t, Y_t)}_{\text{Holistic reward}}  = \sum_{a\in \mathbb{A}} \underbrace{ \sum_{t=T_0}^{T_1} r\left(C_{t,a} \right) }_{\text{Item-specific reward}}.
\end{equation}
With the choice $r(c)=c$, lifetime rewards track total engagement time. A strictly concave content-level reward $r(\cdot)$ prioritizes engagement with many pieces of content over extreme engagement with a single item. Our specific implementations correspond to the reward
\[
r(c)=\ind(c>0),
\]
in which case lifetime rewards value item-relationships which span across many days (i.e. periods $t$).

\subsection{A structured $Q$-function}\label{subsec:structured-Q}
In this section, we present a specialized approach to $Q$-function modeling, tailored to the hypothesis that recommendations critically contribute to users' long-term satisfaction through their impact on item-level listening habits. This tailored approach results in a specialized formula for the $Q$-function, which we derive under various structural assumptions. These assumptions will be detailed in Subsection \ref{subsec:assumptions_and_theorem}. 

To quantify these habits, we introduce the concept of `stickiness models' in the Subsection \ref{subsec:stickiness}. These models forecast the strength of a user's listening habit with an item, based on the system's understanding of their tastes and their history with the item.

Following this, Subsection \ref{subsec:counterfactuals} demonstrates how we assess the long-term value of recommendations. We do this by examining changes in projected stickiness, effectively measuring how a recommendation today might help a user form a lasting listening habit.

\subsubsection{Stickiness models.}\label{subsec:stickiness}
A key ingredient of our modeling approach is the concept of `stickiness models'. These models forecast the strength and durability of a user's engagement with a specific item over time. Crucially, these predictions are both personalized to individual users and tailored to specific items.

Formally, we define an item-level value function, or ``stickiness-model'', as
\begin{equation}\label{eq:stickiness_definition}
V_{\pi^0}^{(a)}\left(  z ; u \right) =\mathbb{E}_{\pi^0}\left[ \sum_{\tau=t+1}^{T_0} \underbrace{\ind\left(C_{\tau,a}>0\right)}_{=r(C_{\tau,a})}
  \mid Z_{t+1,a}=z \, ,\, u_t=u, S_{t+1} \neq \varnothing  \right].
\end{equation}
Recall that $C_{t,a}$ denotes the user's consumption of item $a$ at period $t$. This model calculates the expected number of future periods in which the user will engage with item $a$, given their current content-relationship state $z$ with the item and their overall taste vector $u$. Notice that the stickiness model in \eqref{eq:discovery_stickiness_prototype} is an idealized quantity, which we will eventually need to approximate via regression. 

The stickiness models defined above are personalized based on just two key factors: the vector representation of the user's taste and the user's content-relationship state with the item. While other user features could potentially be incorporated, these core components have proven sufficient in our prototype implementations.

\begin{remark}[Comment on stickiness estimation]
  Anecdotally, a myriad of factors determine whether an item is ``sticky'': stickiness could, for instance, be driven by intense fandom for a unique creator or the unique utility of a short news podcast on a specialized topic. When eventually estimating stickiness models from data, as described in Section \ref{sec:prototypes} and especially Subsection \ref{subsubsec:discovery_stickiness},  we will let the data speak for itself.  We gather historical user listening journeys with each item over a long time horizon, and train models that predict the strength of a user's lasting relationship with the item. These models are personalized based on a specific user's features, while also capturing item-specific patterns learned from the collective behavior of many users. This approach allows us to infer which items are likely to be sticky for particular users.   
\end{remark}

\subsubsection{Counterfactual valuation logic}\label{subsec:counterfactuals}

 We present a model for the $Q$-function that is used in our prototypes. At a high level, it assigns large value to recommendation actions that contribute positively to a listening habit, as measured by changes in item-level stickiness predictions. For now, we emphasize intuition for the formula;  Section \ref{subsec:assumptions_and_theorem} introduces formal assumptions under which it is truly ``correct'', rather than a pragmatic simplification of reality. 
 
\begin{cor}\label{cor:Q-function-prototype}  Under reward functions satisfying \eqref{eq:separable-rewards}, and  Assumptions \ref{assumption:direct-short-term-impact}-\ref{assumption:bonus2}, if $S_t \neq \varnothing$ then for each $a\in \Abb_\star$,
\begin{equation}
\begin{aligned}
\label{eq:resurfacing-qval}
Q_{\pi_0}(S_t, a) - b_{\pi^0}(S_t) =
& \underbrace{\Big[ \mathbb{P}_{\pi^0}(C_{t,a}> 0 | u_t, X_t, Z_{t,a}, A_{t,\star}=a) -   \mathbb{P}_{\pi^0}(C_{t,a}> 0 | u_t, X_t, Z_{t,a}, A_{t,\star}\neq a)\Big]}_{\text{impact of recommendation on consumption probability}}  \\
& \qquad\qquad \times \underbrace{\left[ \left(1 + \gamma V^{(a)}(u_t, Z_{t+1,a}^{+}) \right) - \left(0 + \gamma V^{(a)}(u_t, Z_{t+1,a}^{-}) \right)  \right]}_{\text{impact of consumption event on long-term engagement}}
\end{aligned}
\end{equation}
where $Z_{t+1,a}^{+} = f(Z_{t,a},1)$ and $Z_{t+1,a}^{-} = f(Z_{t,a},0)$
denote successor content-relationship-states that follow a listen and no-listen, respectively and $b_{\pi^0}(S_t)$ does not depend on the recommendation decision $a$; it is defined in Theorem \ref{thm:main}.
\end{cor}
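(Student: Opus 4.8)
The plan is to derive \eqref{eq:resurfacing-qval} as a specialization of the general structured decomposition of the partial $Q$-function established in Theorem~\ref{thm:main}: first reduce $Q_{\pi^0}(S_t,a)$ to a one-step lookahead written in terms of the stickiness model, then plug in the reward component $r(c)=\ind(c>0)$ and collapse the lookahead to the displayed two-factor product. I would begin from the definition \eqref{eq:partialQ} and substitute the additively separable reward \eqref{eq:separable-rewards}, giving $Q_{\pi^0}(S_t,a)=\sum_{a'\in\Abb}\E_{\pi^0}\big[\sum_{\tau=t}^{T_1} r(C_{\tau,a'})\mid S_t,\,A_{t,\star}=a\big]$. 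The first structural claim --- this is where Assumption~\ref{assumption:direct-short-term-impact} (the action at position $\star$ only directly affects today's consumption of the recommended item) together with the incremental content-state update \eqref{eq:content-state} are used --- is that every summand with $a'\neq a$ is invariant to the choice $a$ at position $\star$; these, along with the part of the $a'=a$ summand that does not move with the decision, get collected into the decision-independent baseline $b_{\pi^0}(S_t)$, whose exact form is the one pinned down in Theorem~\ref{thm:main}.

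Next I would handle the surviving item-$a$ term $\E_{\pi^0}\big[\sum_{\tau=t}^{T_1} r(C_{\tau,a})\mid S_t,\,A_{t,\star}=a\big]$ by peeling off period $t$. Its period-$t$ contribution is $\E_{\pi^0}[r(C_{t,a})\mid S_t,\,A_{t,\star}=a]$; for the tail $\sum_{\tau>t}$ I would condition on $Z_{t+1,a}=f(Z_{t,a},C_{t,a})$ and the survival event $\{S_{t+1}\neq\varnothing\}$. By Remark~\ref{rem:discounting} the exogenous geometric lifetime contributes the factor $\gamma$, and by the surrogacy/Markov hypothesis among Assumptions~\ref{assumption:direct-short-term-impact}--\ref{assumption:bonus2} --- namely that $(u_t,Z_{t+1,a})$ is a sufficient statistic for future consumption of item $a$ --- the inner conditional expectation is exactly the stickiness model $V^{(a)}_{\pi^0}(u_t,Z_{t+1,a})$ of \eqref{eq:stickiness_definition}. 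Hence the item-$a$ term equals $\E_{\pi^0}\big[r(C_{t,a})+\gamma V^{(a)}_{\pi^0}(u_t,f(Z_{t,a},C_{t,a}))\mid S_t,\,A_{t,\star}=a\big]$.

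To finish, I would specialize $r(c)=\ind(c>0)$ and use the remaining assumption (one of \ref{assumption:direct-short-term-impact}--\ref{assumption:bonus2}) that $f(z,c)$ depends on $c$ only through $\ind(c>0)$. Then the integrand $r(C_{t,a})+\gamma V^{(a)}_{\pi^0}(u_t,f(Z_{t,a},C_{t,a}))$ is a two-valued random variable: it equals $v_1:=1+\gamma V^{(a)}(u_t,Z_{t+1,a}^{+})$ on $\{C_{t,a}>0\}$ and $v_0:=0+\gamma V^{(a)}(u_t,Z_{t+1,a}^{-})$ on $\{C_{t,a}=0\}$, where $Z_{t+1,a}^{\pm}=f(Z_{t,a},1),f(Z_{t,a},0)$ are deterministic given $S_t$. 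Since $v_0,v_1$ are $S_t$-measurable they are unchanged under conditioning on $A_{t,\star}=a$ versus $A_{t,\star}\neq a$, so writing $p$ for the listen probability and using $\E[g]=v_0+p(v_1-v_0)$ for both conditionings and subtracting, the $v_0,v_1$ terms cancel and one is left with $(p_{\text{rec}}-p_{\text{no rec}})(v_1-v_0)$, which is precisely \eqref{eq:resurfacing-qval}; the reduction of the conditioning set from $S_t$ down to $(u_t,X_t,Z_{t,a})$ inside the two probabilities is again Assumption~\ref{assumption:direct-short-term-impact}.

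The hard part is not any of this algebra but Step~1: justifying rigorously that recommending $a$ at position $\star$ leaves the lifetime reward contributions of all other items $a'\neq a$ genuinely untouched (so they may be absorbed into an $a$-free $b_{\pi^0}$), even though a single action perturbs the whole downstream trajectory $S_{t+1},S_{t+2},\dots$ and the latent type $\omega$ couples behavior across items through $\psi$. Getting this cleanly requires the full strength of the conditional-independence structure built into Assumptions~\ref{assumption:direct-short-term-impact}--\ref{assumption:bonus2}, which is exactly why the heavy lifting lives in Theorem~\ref{thm:main}; once that decomposition and the sufficiency of $(u_t,Z_{t+1,a})$ are in place, the passage to Corollary~\ref{cor:Q-function-prototype} is routine.
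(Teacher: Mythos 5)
Your derivation is correct and follows essentially the same route as the paper: the corollary is obtained exactly as you describe, by taking the decomposition of Theorem~\ref{thm:main} (whose proof you accurately re-trace — Assumption~\ref{assumption:direct-short-term-impact} to isolate the item-$a$ term and form $b_{\pi^0}$, Assumption~\ref{assumption:markov-in-time} plus the geometric lifetime to produce the one-step lookahead with factor $\gamma$), then using Assumption~\ref{assumption:bonus1} to collapse the integrand to the two values $1+\gamma V^{(a)}(u_t,Z^{+}_{t+1,a})$ and $\gamma V^{(a)}(u_t,Z^{-}_{t+1,a})$ and subtracting the two conditional expectations. The only slip is a label: the reduction of the conditioning sets — from $S_t$ to $(u_t,X_t,Z_{t,a})$ in the listen probabilities and from $(Z_{t+1,a},S_t)$ to $(u_t,Z_{t+1,a})$ in the stickiness model — is Assumption~\ref{assumption:bonus2} (sufficiency of state variables), not Assumption~\ref{assumption:direct-short-term-impact} or surrogacy; since Assumption~\ref{assumption:bonus2} is among your hypotheses and you use its content correctly, this is a misattribution rather than a gap.
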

We say the user ``listens'' to item $a$  if $C_{t,a}> 0$, whereas ``no-listen'' occurs that period if $C_{t,a}=0$.
The term $\mathbb{P}_{\pi^0}(C_{t,a}> 0 | u_t, X_t, Z_{t,a}, A_{t,\star}=a)$ denotes the probability that the user listens to item $a$ today if it is recommended at position $\star$. The probability conditioned on $A_{t,\star}\neq a$, corresponding to organic listens, is defined analogously.

The overall formula values a recommendation based on two factors:
\begin{enuminline}
\item its influence on whether a listen of that item occurs today, and
\item whether listen and no-listen events result in substantially different expected return days to the recommended item, as assessed through the stickiness model.
\end{enuminline}
One can estimate this $Q$-function by separately estimating models of short-term listening behavior and the long-term, item-level, stickiness models.

We have now derived a distinctive model for  the $Q$-function. Most notably, our modeling isolates the role of recommendations in  fostering item-level listening habits. Rather than directly making predictions about complex app-level interactions that are only loosely connected to the recommendation, all components of the $Q$-function formula focus on future engagement with the recommended item itself. Another notable feature  of the model is that it allows us to separate short-term and long-term effects, which can be estimated using different data sources and time horizons.

\subsection{Formal assumptions and derivation}\label{subsec:assumptions_and_theorem}

This section describes the assumptions and derivations underlying the formula in Corollary \ref{cor:Q-function-prototype}. We state assumptions under which the $Q$-function (effectively) decomposes across items and periods, a result that generalizes Corollary \ref{cor:Q-function-prototype}. Then, in the next subsection, we state some extra modeling choices that yield the specific form of Corollary \ref{cor:Q-function-prototype}.  

\subsubsection{Structural assumptions.}

\paragraph{Restricting to direct effects of a recommendation.}
It is natural to expect that recommending an item increases the chance the user listens to \emph{that} item.
We refer to this as the \emph{direct} impact of a recommendation.
As a simplifying assumption, we assume that recommendations have no indirect effects.
In words, the next assumption states that, given what is known about a user (i.e., $S_{t}$), and given that item $a_\star$ is not recommended, discounted future reward associated with consumption of item $a_{\star}$ does not depend on which other item ($A_{t,\star})$ is recommended. 
See Figure \ref{fig:causalgraphfull} for a graphical representation.
\begin{assumption}[No indirect impact of recommendation]
\label{assumption:direct-short-term-impact}
Under $\pi^0$, for every $a_{\star} \in \Abb_{\star}$,
\[
\sum_{\tau=t}^{T_1} r\left(C_{\tau,a_{\star}}\right)
    \perp A_{t, \star} \mid S_{t} \, , \, \ind\{A_{t,\star} = a_{\star}\}.
\]
\end{assumption}

Recommendations likely do have indirect effects.
There could be substitution effects, where a successful recommendation supplants other consumption, or spillover benefits, where a successful recommendation makes a user more likely to engage with others, or possible annoyance, where a poor recommendation causes a user to close the app.
To build a simple, pragmatic, system, we do not model complex indirect effects of recommendations. 

It is an interesting, though challenging, problem, to  causally assess the extent of substitution effects using available randomized data. A/B test results provide some assurance that engagement created via our recommendations cannot does not purely cannibalize other listening, since our policy substantially improves metrics like overall podcast listening (Section \ref{sec:discovery-home}).

\begin{figure}[t]
    \caption{
        Graphical model encoding Assumptions \ref{assumption:direct-short-term-impact} and \ref{assumption:markov-in-time}.}
    \label{fig:causalgraphfull}
    \centering
    \setlength{\fboxsep}{1pt}%
    \setlength{\fboxrule}{1pt}%
    \fbox{\includegraphics[width=.75\textwidth]{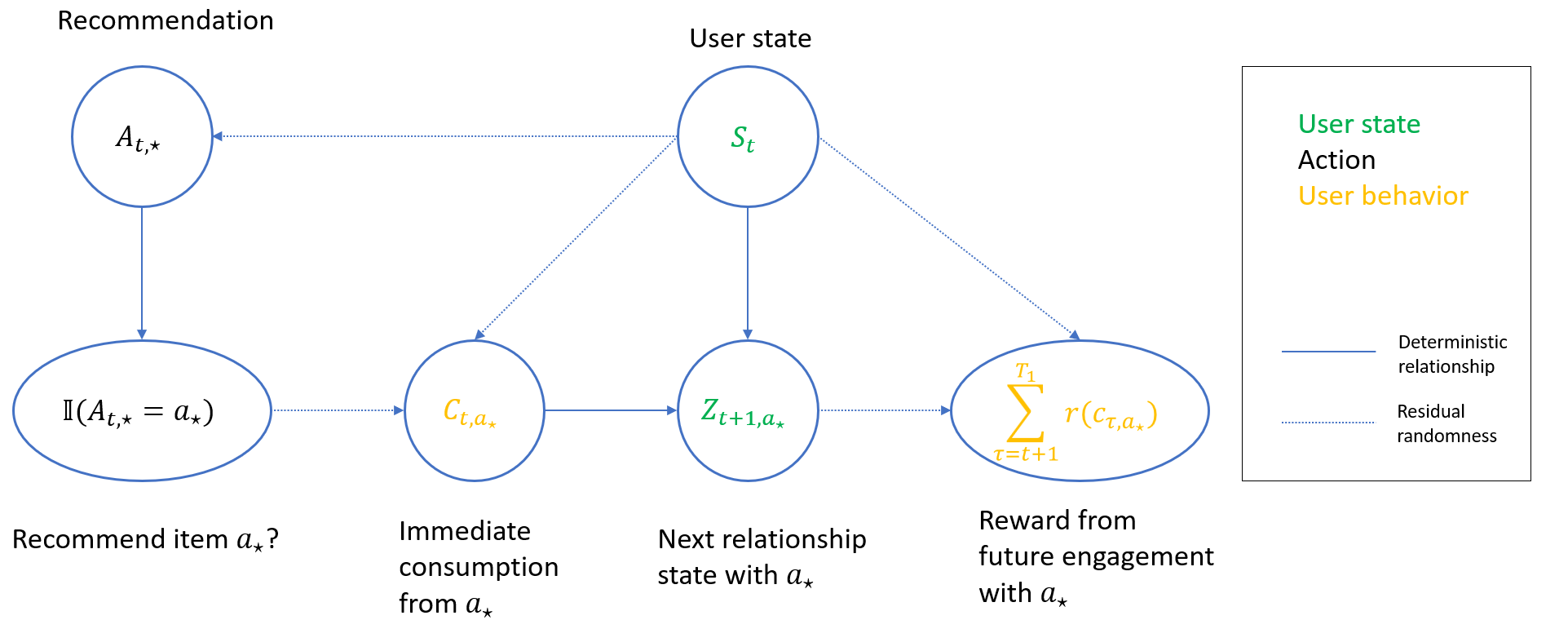}}
\end{figure}

\paragraph{Surrogacy.}

Our next assumption is that the impact of deciding to recommend an item on rewards associated with future engagement with that item is mediated through the recommendation's impact on the user's next engagement-state with the item. Figure \ref{fig:causalgraphfull} displays the conditional independence structures stated in Assumptions \ref{assumption:direct-short-term-impact} and \ref{assumption:markov-in-time}.
Conditioned on the user's state, the (randomized) decision of whether to recommend item $a_\star$ influences immediate consumption of item $a_\star$, which then influences the next relationship state with item $a_{\star}$, which may alter projected future engagement with that item.
Notice that this assumption conditions on the full state in the current period (i.e. period $t$) but the content-state in the \emph{next} period ($t+1$). 
\begin{assumption}[Surrogacy]
    \label{assumption:markov-in-time}
    Under $\pi^0$, for any item $a_*\in \Abb_\star$ and period $t$,
    \[
    \sum_{\tau=t+1}^{T_1}  r(C_{\tau,a_{\star}})
    \perp \ind\{A_{t,\star} = a_\star\} \mid S_{t}, Z_{t+1, a_\star}.
    \]
\end{assumption}
It is worth emphasizing that future rewards still depend on the item $a_\star$ itself. A critical part of our prototypes is in understanding which items users are likely to form recurring habits with.

That actions can be thought of as influencing the next state of the system is always true if the state variable encodes sufficient information about the history of observations (e.g., consider the exhaustive state in Section \ref{sec:formulation}). Such Markov assumptions become stringent---and powerful---when the state variable selectively forgets aspects of the observation history. This ``forgetting'' enhances statistical efficiency through a data-pooling effect \citep{cheikhi2023statistical}. In our model, content-relationship states capture a user's level of engagement with an item while disregarding the specific path that led to that engagement. Essentially, we assume that the discovery of a new item via a recommendation on the home banner (Figure \ref{fig:screenshots}) results in the same future engagement as if the user had found the item through active search. Or A/B tests provide evidence supporting this modeling assumption, as discussed in Section \ref{sec:discovery-tars}.

\subsubsection{General result: value function decomposition.}
 We state a result that generalizes Corollary \ref{cor:Q-function-prototype}. 
Note that the conditional expectation in the formula for $Q_{\pi^0}$ integrates only over the consumption of item $a$, $C_{t,a}$, since the content-relationship state $Z_{t,a}$ is a known given the full state $S_t$.
\begin{theorem}\label{thm:main} For additively separable reward functions satisfying \eqref{eq:separable-rewards}, and Assumptions \ \ref{assumption:direct-short-term-impact}  and \ref{assumption:markov-in-time}, if $S_t \neq \varnothing$ then for each $a\in \Abb_\star$,
        \begin{align*}
        Q_{\pi^0}(S_t, a) = b_{\pi^0}(S_t)  &+  \mathbb{E}_{\pi^0}\left[ r(C_{t,a})    +  \gamma V^{(a)}_{\pi^0}\left( f\left(Z_{t,a}, C_{t,a} \right)   \,,\, S_t\right)   \mid A_{t,\star} =  a \,,\, S_{t} \right]\\
        &- \mathbb{E}_{\pi^0}\left[ r(C_{t,a})    +  \gamma V^{(a)}_{\pi^0}\left( f\left(Z_{t,a}, C_{t,a} \right)   \,,\, S_t\right)   \mid A_{t,\star} \neq  a \,,\, S_{t} \right],
    \end{align*}
    where $Z_{t+1,a}=f\left(Z_{t,a}, C_{t,a} \right)$ is the next content-relationship state with item $a$,
    \begin{align}\label{eq:item-level-value-func}
V^{(a)}_{\pi^0}(z_a;  s) = \mathbb{E}_{\pi}\left[ \sum_{\tau=t+1}^{T_1} r\left(C_{\tau,a}\right)
  \mid Z_{t+1,a}=z_a \, ,\, S_t=s, S_{t+1} \neq \varnothing  \right]
\end{align}
is an item-level value function (or stickiness model) and the baseline value
    \[
    b_{\pi^0}(S_t) = \sum_{a\in \Abb} \mathbb{E}_{\pi^0}\left[ \sum_{\tau=t}^{T_1} r\left(C_{\tau,a}\right) \mid S_t, A_{t,\star} \neq a \right]
    \]
    does not depend on the recommendation decision $a$.
\end{theorem}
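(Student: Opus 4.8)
The plan is to peel off the reward contributions item by item: Assumption~\ref{assumption:direct-short-term-impact} will let me push every item other than the one under consideration into the baseline $b_{\pi^0}(S_t)$, and then the single item $a$ is handled by a one-step Bellman-style unrolling that invokes Assumption~\ref{assumption:markov-in-time} together with the exogeneity of deactivation. First I would evaluate the $Q$-function at a generic period $t$ rather than at period $0$ as written in \eqref{eq:partialQ} (a routine appeal to time-homogeneity of the dynamics and stationarity of $\pi^0$), so that $Q_{\pi^0}(S_t,a)=\mathbb{E}_{\pi^0}\big[\sum_{\tau=t}^{T_1}R(Y_\tau,A_\tau)\mid S_t,\,A_{t,\star}=a\big]$, and then apply additive separability \eqref{eq:separable-rewards} to write this as $\sum_{b\in\mathbb{A}} g_b$ with $g_b:=\mathbb{E}_{\pi^0}\big[\sum_{\tau=t}^{T_1} r(C_{\tau,b})\mid S_t,\,A_{t,\star}=a\big]$. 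For $b\neq a$, the event $\{A_{t,\star}=a\}$ forces $\ind\{A_{t,\star}=b\}=0$, so Assumption~\ref{assumption:direct-short-term-impact} (applied with item $b$ in place of the distinguished item $a_\star$) shows $g_b$ is unchanged if the conditioning event is replaced by $\{A_{t,\star}\neq b\}$. Adding and subtracting the analogous $b=a$ quantity, the sum over $b$ becomes exactly $b_{\pi^0}(S_t)+\mathbb{E}_{\pi^0}[\sum_\tau r(C_{\tau,a})\mid S_t,A_{t,\star}=a]-\mathbb{E}_{\pi^0}[\sum_\tau r(C_{\tau,a})\mid S_t,A_{t,\star}\neq a]$.

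It then remains to show that, for either conditioning event $E\in\{\{A_{t,\star}=a\},\{A_{t,\star}\neq a\}\}$, one has $\mathbb{E}_{\pi^0}\big[\sum_{\tau=t}^{T_1}r(C_{\tau,a})\mid S_t,E\big]=\mathbb{E}_{\pi^0}\big[r(C_{t,a})+\gamma V^{(a)}_{\pi^0}(f(Z_{t,a},C_{t,a});S_t)\mid S_t,E\big]$. I would split off the $\tau=t$ term $r(C_{t,a})$, which is already in the desired form, and on the tail $\sum_{\tau\geq t+1}r(C_{\tau,a})$ condition additionally on $Z_{t+1,a}=f(Z_{t,a},C_{t,a})$ (admissible since $Z_{t,a}$ is $S_t$-measurable). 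Since $E$ is precisely the event $\{\ind\{A_{t,\star}=a\}=1\}$ or $\{\ind\{A_{t,\star}=a\}=0\}$, Assumption~\ref{assumption:markov-in-time} lets me drop the conditioning on $E$, leaving $\mathbb{E}_{\pi^0}[\sum_{\tau\geq t+1}r(C_{\tau,a})\mid S_t,Z_{t+1,a}]$. Finally I would condition on whether the user is still active: on $\{S_{t+1}=\varnothing\}=\{T_1=t\}$ the tail is an empty sum, hence $0$; the complementary event $\{S_{t+1}\neq\varnothing\}$ is independent of $(S_t,Z_{t+1,a})$ (deactivation is exogenous) and has probability $\gamma$ conditional on $S_t\neq\varnothing$; and the conditional expectation of the tail given $(S_t,Z_{t+1,a},S_{t+1}\neq\varnothing)$ is, by the definition \eqref{eq:item-level-value-func}, exactly $V^{(a)}_{\pi^0}(Z_{t+1,a};S_t)$. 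Undoing the inner conditioning via $Z_{t+1,a}=f(Z_{t,a},C_{t,a})$ and substituting into the decomposition of the previous paragraph yields the claimed identity.

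The main obstacle is this last step: one must condition on $Z_{t+1,a}$ rather than on the (possibly finer) $C_{t,a}$ so that Assumption~\ref{assumption:markov-in-time} applies verbatim, and then extract the discount factor $\gamma$ by carefully isolating the exogenous event $\{S_{t+1}\neq\varnothing\}$ so that what remains coincides with the precise conditioning set appearing in the definition of the stickiness model $V^{(a)}_{\pi^0}$. The remaining bookkeeping is routine, modulo two minor points worth stating explicitly: the appeal to time-homogeneity used to evaluate $Q_{\pi^0}$ at period $t$, and the implicit requirement that both $\{A_{t,\star}=a\}$ and $\{A_{t,\star}\neq a\}$ occur with positive probability given $S_t$ (guaranteed by Assumption~\ref{ass:overlap} when $\lvert\mathbb{A}_\star\rvert\geq 2$, the case $\lvert\mathbb{A}_\star\rvert=1$ being vacuous).
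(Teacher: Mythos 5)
Your proposal is correct and mirrors the paper's own argument: additive separability plus Assumption \ref{assumption:direct-short-term-impact} folds all items other than $a$ into the baseline $b_{\pi^0}(S_t)$, and the item-$a$ term is unrolled one step via the tower property, Assumption \ref{assumption:markov-in-time} (conditioning on $Z_{t+1,a}$ to drop the recommendation indicator), and the exogenous geometric lifetime to produce the factor $\gamma V^{(a)}_{\pi^0}(f(Z_{t,a},C_{t,a});S_t)$. The only deviations are cosmetic (working at generic $t$ rather than normalizing to $t=0$, and isolating the $\{S_{t+1}\neq\varnothing\}$ event after rather than alongside the surrogacy step), so there is nothing substantive to add.
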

The generalized stickiness model $V^{(a)}_{\pi^0}(z_a;  s)$  in the theorem measures expected long-term reward associated with a user's engagement with item $a$. If $r(c)=\ind(c>0)$ is a binary indicator of consumption of the item, then this generalized definition aligns with the one presented in \eqref{eq:stickiness_definition} in Section \ref{subsec:stickiness}. 

\subsubsection{Extra modeling choices yielding Corollary \ref{cor:Q-function-prototype}}
We introduce two further assumptions. It should be emphasized that Theorem \ref{thm:main} does not impose these assumptions.
They are used only in a special case of the result in Corollary \ref{cor:Q-function-prototype}, which mirrors the prototypes in Section \ref{sec:prototypes}. Under the next assumption, it is enough to track a binary indicator of whether a user engages with an item in a given period, discarding the continuous outcome of their engagement time. 
\begin{assumption}[Binary outcomes]\label{assumption:bonus1}
    The reward function is a binary indicator of positive consumption, written $r(c)=\ind\{c>0\}$. The function $f(z,c)$ defined in \eqref{eq:content-state} maps all positive consumption levels to the same value, meaning $f(z,c)=f(z, \ind(c>0))$. 
\end{assumption}
When this assumption holds, the rule \eqref{eq:content-state} for updating the content-engagement state can be rewritten as
\begin{equation}\label{eq:content-state-binary}
    Z_{t+1,a} = f(Z_{t,a}, \ind\{ C_{t,a}>0 \}).
\end{equation}
The exponential moving averages in Example \ref{ex:ema-content-state} provide a concrete illustration. 

The next assumption lets us replace the raw state variable $S_t$ appearing in Theorem \ref{thm:main}---which represents the full data available on the user---with concise state variables introduced in Section \ref{subsec:pragmatic-user-state}.  
\begin{assumption}[Sufficiency of state variables]\label{assumption:bonus2}
  Under $\pi^0$, for any item $a\in \Abb_\star$ and period $t$,
    \[
     \mathbb{P}_{\pi^0}(C_{t,a}> 0 | S_t, A_{t,\star}) = \mathbb{P}_{\pi^0}(C_{t,a}> 0 | u_t, X_t, Z_{t,a}, A_{t,\star}) 
    \]
  and
    \[
    \mathbb{E}_{\pi^0}\left[\sum_{\tau=t+1}^{T_1}  r(C_{\tau,a}) \mid  S_{t}, Z_{t+1,a}\right]
    =\mathbb{E}_{\pi^0}\left[\sum_{\tau=t+1}^{T_1}  r(C_{\tau,a}) \mid  u_t, Z_{t+1,a}\right].
    \]
\end{assumption}

\section{Online and offline experiments showing real-world impact}
\label{sec:prototypes}
Our theoretical framework, centered on optimizing for long-term user satisfaction through aiding in the formation of lasting item-level listening habits, has been put to the test in real-world implementations at Spotify. These implementations were initially designed to serve as testable prototypes that, even with low engineering effort, might demonstrate the enormous potential impact of purposefully optimizing for the long term in recommendation systems. But the methodology proved so effective that it is now used in production in several parts of the app. We share evidence from A/B tests involving tens of millions of users.

Key findings from our experiments show significant improvements in both targeted and holistic metrics. In a focused A/B test, we directly measured the long-term outcomes of individual recommendations, observing massive increase in habitual listening to the recommended items.
A larger-scale, persistent experiment revealed that our policy, when applied consistently, leads to notable enhancements in overall app-level outcomes. These results are particularly noteworthy in an industry where myopic, short-term optimization is often the norm. 

For product reasons, two cases are treated separately. Podcast \emph{discovery} recommendations (Section~\ref{sec:discovery-proto}) are restricted to podcast shows the user has never tried previously. Podcast \emph{resurfacing} recommendations (Section \ref{sec:resurfacing-proto})  focus on shows the user has already listened to previously.
It is important to note that our methodology is unified. The logic governing discovery recommendations is essentially a special case of the resurfacing logic.

Both discovery and resurfacing recommendations yielded promising offline results. However, the discovery use-case was prioritized for initial A/B testing due to its larger potential impact and simpler engineering requirements\footnote{Resurfacing recommendations have limited benefit for users with minimal podcast exposure and pose greater engineering challenges. They require near-real-time systems to maintain engagement states and compute stickiness model predictions on-the-fly, incurring significant infrastructure costs.}.  The success of the discovery recommendation A/B test led to concentrated efforts on expanding this approach. While resurfacing recommendations received less immediate attention as a result, they may still offer substantial potential for future applications or in other recommendation scenarios.

\subsection{Podcast discovery} 
\label{sec:discovery-proto}

 We begin in Section \ref{subsubsec:fine-tuned-short-term} by describing, at a somewhat superficial level, how podcast recommendations are optimized for short-term outcomes. Section \ref{subsubsec:discovery_stickiness} then describes, in greater detail, how this methodology was augmented to optimize for longer-term goals. Finally, Sections \ref{sec:discovery-tars} and \ref{sec:discovery-home} present results from two A/B tests. 
 
\subsubsection{Control (or ``incumbent'') policy: optimizing the probability of immediate listening}
\label{subsubsec:fine-tuned-short-term}

For concreteness, consider now the problem of recommending a podcast show on the banner component of the home screen in the Spotify mobile app, displayed in Figure \ref{fig:screenshot-banner}. Given a trained representation of users and items, one simple recommendation strategy is to pick an item with maximal affinity score \eqref{eq:affinity} among a pool of candidates.

The recommendation policy in the control group---what we called the ``incumbent policy'' in our theory---uses a more refined strategy. It consists of training a fine-tuned model that leverages the pre-trained embedding to predict the short-term consequences of a recommendation.
Consider a dataset $\Dscr = \{(\nu_{A}, u, X, \ind(Y >0) ) \}$ describing the immediate outcome of past recommendations on the banner.
This contains the vector representation $\nu_{A}$ of the recommended item, the vector representation $u$ of the user at the time of the recommendation, a vector $X$ describing the user's context (e.g. the time of day), and an indicator $\ind(Y>0)$ of whether the user clicked and listened to the item.
We might learn the weights $w$ of a neural network $P_{w}\left(\nu, u, x\right) \in (0,1)$ by minimizing the prediction error in predicting the indicator $\ind(Y >0)$ on $\Dscr$.
Specifically, they are chosen to approximately solve:
\begin{equation}\label{eq:short-term-training}
\min_{w} \sum_{(\nu, u, x, \ind(Y>0)) \in \Dscr} {\rm CrossEntropyLoss}\big[ P_{w}\left(\nu, u, x\right) \, ,\, \ind(Y>0)  \big],
\end{equation}
where ${\rm CrossEntropyLoss}(p, q) = -p \log q - (1 - p) \log (1 - q)$.
Given trained weights $\hat{w}$, a myopic recommender selects the item $\argmax_{a\in \Abb_{\star}} P_{\hat{w}}(\nu_{a}, u_t, X_t)$ that the user is most likely to listen to among a pool of eligible candidates $\Abb_{\star}$.

\subsubsection{Treatment policy: optimizing for lasting discoveries}\label{subsubsec:discovery_stickiness}
Over the same list of eligible items, the treatment policy selects the one which maximizes the long-term value measure $Q_{\pi^0}(S_t, a)$, leveraging a variant of the formula in Corollary \ref{cor:Q-function-prototype}.  

The focus on discovery recommendations means we only need to consider two possible content relationship states: 1) the state $\vec{0}$, representing never having listened to $a$ previously and b) the state $z^+ \equiv \alpha \equiv f(1, \vec{0})$ representing a user who listened for the first time yesterday. Focusing on these two states, we rewrite the formula in Corollary \ref{cor:Q-function-prototype} as, 
\begin{equation}
\begin{aligned}
\label{eq:true-qval-discovery}
Q_{\pi_0}(S_t, a) - b_{\pi^0}(S_t) =
& \underbrace{\Big[ \mathbb{P}_{\pi^0}(C_{t,a}> 0 | u_t, X_t, Z_{t,a}=\vec{0}, A_{t,\star}=a) -   \mathbb{P}_{\pi^0}(C_{t,a}> 0 | u_t, X_t, Z_{t,a}=\vec{0}, A_{t,\star}\neq a)\Big]}_{\text{impact of recommendation on  probability of listening}}  \\
& \qquad\qquad \times \underbrace{\left[ \left(1 + \gamma V^{(a)}(u_t, z^+) \right) - \left(0 + \gamma V^{(a)}(u_t, \vec{0}) \right)  \right]}_{\text{impact of listening on long-term engagement}}.
\end{aligned}
\end{equation}
To implement this formula, one must model how recommendations change the probability of listening \emph{to items the user the user has never previously listen to}, and how listening to an item for the first time increases projected future activity with the item (i.e. changes stickiness). 

We now describe how the treatment policy separately estimates the short-term engagement probabilities and long-term stickiness predictions.  Each involves some approximations that are designed to faithfully capture the essential logic in \eqref{eq:true-qval-discovery} while minimizing engineering effort and simplifying internal communication.  

\paragraph{Counterfactual terms are small for discovery recommendations.} The  formula in \eqref{eq:true-qval-discovery} evaluates the \emph{counterfactual} value of a recommendation. To calculate it, one should model the chance a user listens if the item is recommended and subtract from that the chance of the user listens organically, without a recommendation. Later, when we look at resurfacing recommendations---which may recommend items the user already listens to habitually---the counterfactual nature of this evaluation will be essential. 

But for discovery recommendations  (which, recall, are restricted to items the user has never listened to previously), this counterfactual reasoning can be simplified. Given the vast number of items in the corpus, a user is unlikely to organically discover and listen to a specific new item, for the first-time, today. As a result, discoveries that occur via a recommendation can be (essentially) fully credited to the recommendation itself. 

To illustrate this, consider the results in Figure \ref{fig:tars-holdback}. It compares podcast discovery outcomes among users who received personalized promotions of new podcast shows with those in a holdback group, for whom promotions were generated according the same policy, but never shown. Users in the holdback group are targeted with the same number of recommendations (last row of Figure \ref{fig:tars-holdback}), suggesting that randomization is successful, without bias. But users in the holdback group were an order of magnitude less likely to subsequently discover (i.e. listen to) the show selected for them. 

\begin{figure}[t]
    \centering
    \includegraphics[width=.75\textwidth]{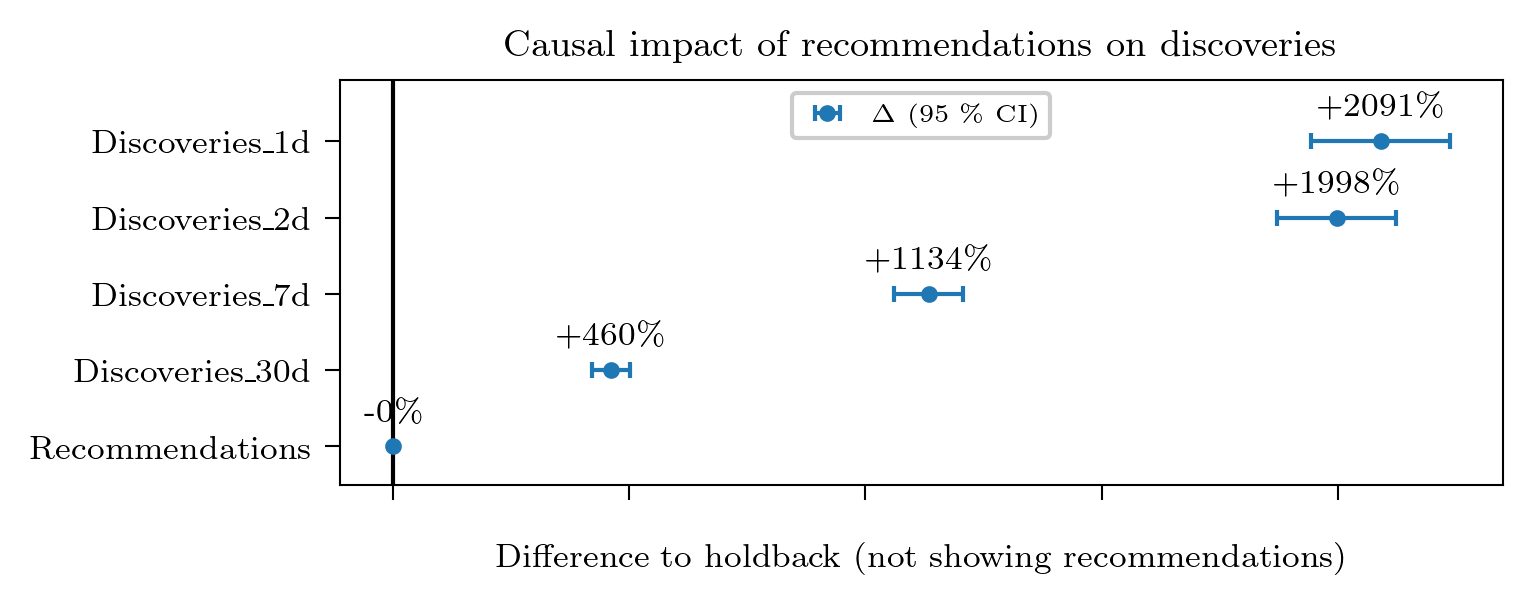}
    \caption{Nearly all discoveries from podcast recommendations can be causally credited to the recommender system.}
    \label{fig:tars-holdback}
\end{figure}

\paragraph{Estimating a short-term model.}
Since, for discovery recommendations, certain counterfactual terms are small, we  approximate the impact of a recommendation on the user's consumption probability (the first term in \eqref{eq:true-qval-discovery})  by the estimated chance  $P_{w}(\nu_a, u_t, X_t)$
 that user listens to the recommendation:
    \[
    \mathbb{P}_{\pi^0}(C_{t,a}> 0 | u_t, X_t, \vec{0}, A_{t,\star}=a) -   \mathbb{P}_{\pi^0}(C_{t,a}> 0 | u_t, X_t, \vec{0}, A_{t,\star}\neq a) \approx P_{w}(\nu_a, u_t, X_t).
    \]
    With this approximation,  the treatment policy uses the same short-term estimates as the control (or ``incumbent'') policy, differing only because of the stickiness models. Bringing the treatment policy closer to the control policy in this manner makes it easier to communicate the results of the test to internal stakeholders. 

    \paragraph{Estimating the long-term stickiness models.}

    Now we describe practical estimation of the stickiness models in \eqref{eq:true-qval-discovery}. First, recall that our theoretical modeling introduced an implicit discount factor $\gamma$ by assuming that users churn with constant probability. This is not true in practice, so any discount factor  needs to be chosen in an ad-hoc manner. Rather than choose $\gamma =59/60$, corresponding to an ``effective horizon'' of $1/(1-\gamma)=60$ days, we use a fixed-horizon of 60 days\footnote{%
    We choose a 60-day window for practical reasons.
    It is long enough to capture recurring podcast listening habits, but short enough to measure in practical experiments.} and drop the discounting.  That is, we set $\gamma=1$ in \eqref{eq:true-qval-discovery} and, overloading notation, define the  stickiness of a discovery as    \begin{equation}\label{eq:discovery_stickiness_prototype}
    V_{\pi^0}^{(a)}\left(   z^+ ; u_t \right) =\mathbb{E}_{\pi^0}\left[ \sum_{\tau=t+1}^{t+60} \ind\left(C_{\tau,a}>0\right)
      \mid Z_{t+1,a}=z^+ \, ,\, u_t=u, S_{t+1} \neq \varnothing  \right],
    \end{equation}
a fixed-horizon variant of the definition in \eqref{eq:item-level-value-func}.

To implement \eqref{eq:true-qval-discovery}, one needs to model the stickiness $V_{\pi^0}^{(a)}(u, z)$ for any user vector $u$, any item $a$ and but just two possible content-engagement states $z$. For now, focus on approximating stickiness from the engagement state $z^+$ appearing in \eqref{eq:true-qval-discovery}, representing a user who just listened the item for the first time. We train a new vector representation of items $\{\theta_a \}_{a\in \mathbb{A}}$, called ``stickiness'' vectors,  that (to reduce engineering effort) pair with pre-existing user vectors, and approximate stickiness of new discoveries through dot products as:
\[
V_{\pi^0}^{(a)}(u, z^+) =  u^\top \theta_a.
\]
Recall from~\eqref{eq:affinity} that, through the dot product $\nu_a^\top u$, the original item vector $\nu_a$ encodes a user's propensity to have some interaction the item, however brief it may be.
At times, we call these ``item clickiness vectors.''
In contrast, the taking a dot products with an item stickiness vector $\theta_a$ encodes the propensity of users with certain tastes to return to item $a$ many times after listening to it for the first time.  

Since we use pre-existing user-representations (rather than co-learning them), item stickiness vectors are easy to estimate. For each item $a$, we gather a dataset on users who discovered item $a$, their features at the time they first listened to the item, and their interactions with the item post-discovery. 
Specifically, in our prototypes, we set $\Dscr_a= \{(u, \hat{V}^{(a)} ) \}$ to be a dataset of tuples ranging over users who discovered the item more than 60 days ago. That is, we locate users who are entering engagement-state $z^+$. 
The tuple $(u, \hat{V}^{(a)})$ describes the vector representation of the user's tastes $u\in \mathbb{R}^d$ at the time of the discovery and  $\hat{V}^{(a)}\in  \{0, \cdots, 59\}$  is the sum of consumption indicators in \eqref{eq:discovery_stickiness_prototype}, equal to the number of days in the subsequent 59 days that the user returned to and listened to the item. Item ``stickiness'' vectors $\theta_a$ are trained by solving the least-squares\footnote{In practice, we solve a regularized least-squared problem. Regularizing toward an informed prior mean lets us handle long-tail content, for which limited data is available.} problem
\[
\theta_a  = \argmin_{\theta \in \mathbb{R}^d} \sum_{(u, \hat{V}^{(a)})\in \Dscr_a} (\theta^\top u - \hat{V}^{(a)})^2.
\]

Stickiness vectors are trained not just on recent discoveries from banner recommendations, but on discoveries from across the application and over a fairly long time window.
We gather enough data to do more than fine-tuning (see Remark \ref{rem:stickiness_is_not_finetuning}), instead offering a new understanding of items in the catalog.
This \emph{data-pooling} is a key advantage to training the long-term model separately from the short-term one. Abstractly, it is Assumption \ref{assumption:markov-in-time} that enables this separation.

This process could be repeated to estimate a different set of stickiness vectors from the null state $z=\vec{0}$, representing users who have never tried the item. In practice, those stickiness estimates turn out to be much smaller, an do not meaningfully impact the full evaluation in \eqref{eq:true-qval-discovery}. Again, the intuition is that, given the vast number of items in the corpus, users are unlikely to organically discover and attach to this specific item. For simplicity, we round $v_{\pi^0}^{(a)}(u, \vec{0})$ to $0$, avoiding the need to track this term.

\begin{remark}[New representations, rather than fine-tuning]\label{rem:stickiness_is_not_finetuning}
    We described the short-term models as a ``fine-tuned'' model built on top of foundational learned representations. They rely on foundational feature representations of users and items, which are produced by solving a different prediction problem, combine them with other features, and use this information to predict the probability of a listen given an recommendation. Fine-tuning requires less data on each specific item, so such a model can be trained on recent data, trained specifically to an individual surface (e.g. banner promotions), and responsive to context.    The stickiness vectors can instead be viewed as a new foundational representation. They offer an understanding of each specific item in the catalogue. The idea is that item ``clickiness vectors'' $\nu_a$ do not reflect what makes an item ``sticky'', and for whom it is sticky. Learning a new representation requires lots of data, so it is critical that we pool across surfaces and long-time horizon. 
\end{remark}

\paragraph{Final implementation.} Combining the short- and long-term models above, we derive a faithful approximation of the true formula \eqref{eq:true-qval-discovery} that is very easy to implement and describe to stakeholders. Recommendations are chosen by maximizing the approximate score $\hat{Q}(S_t, a)$ given by 
\begin{equation}\label{eq:discovery_stickiness_implementation}
\hat{Q}(S_t, a) = \underbrace{P_{w}(\nu_a, u_t, X_t)}_{\text{clickiness}} \times \left( 1 +\underbrace{u_t^\top \theta_a}_{\text{stickiness}}\right).
\end{equation}
The term we call clickiness is an existing, heavily optimized short-term model that predicts the probability of a users listens to a new item given it is recommended. Dot products between user vectors  and the newly trained stickiness vectors capture whether a user with given tastes is expected to return to this item many days in the future. It is worth emphasizing that the  additional simplifications in  \eqref{eq:discovery_stickiness_implementation} is due to the focus on discovery recommendations, where some terms in the general formula \eqref{eq:true-qval-discovery} are small enough to be ignored.

\subsubsection{Experimental impact on targeted metrics in a small A/B test}
\label{sec:discovery-tars}

The first A/B test we describe involves results from show promotions on the banner component displayed in Figure \ref{fig:screenshot-banner}.
Twelve markets were included in the test.
For each one, editors hand-picked between 3 and 74 shows that are original and exclusive to Spotify.
The test period lasts one week.
During this time, a single personalized recommendation of an eligible show is made to each user in the test.
For every user, we exclude from the candidate pool any show that the user has already listened to in the past.
Downstream activity is observed for sixty days after the recommendation was first seen by the user.
Users are assigned to one of four groups randomly.
\begin{description}
\item[Control (or ``the incumbent policy'').] In the control group, a recommendation is made to maximize the predicted probability of a listen as in Section \ref{subsubsec:fine-tuned-short-term}.

\item[Personalized.] In the long-term, personalized treatment group, recommendations are determined by maximizing an estimate of the $Q$-value in \eqref{eq:discovery_stickiness_implementation}. This is the treatment group of primary interest. 

\item[Unpersonalized.] The long-term, unpersonalized treatment group uses the same personalized short-term model, but predicts stickiness of item $a$ via an unpersonalized estimate $\bar{v}_a = \lvert \Dscr_a \rvert^{-1} \sum_{(u, R) \in \Dscr_a} R$, instead of estimating vectors $V_a$ that enable personalized predictions.

\item[Square-root.] The long-term, square-root, unpersonalized treatment group maximizes the predicted probability of a listen times the square-root of unpersonalized stickiness.
The square-root is a heuristic transformation that de-emphasizes stickiness differences across items.
\end{description}

For approximately 37\% of users, each of the four recommendation policies select the same item at the time of the impression.
(A contributing factor is that the pool of candidates in a given market may be small.)
We say that the other 63\% of users are \emph{impacted} by their randomized assignment to a treatment  group.
For each user that receives a recommendation, we measure three quantities.
\begin{description}
\item[First streams (or ``discoveries'').] Whether the user listened to the recommendation or not.
We use a 30-second threshold to determine whether a listen has occurred or not. 

\item[60-day activity.] The number of days the user is active with the recommended show in the 60-day window that starts from the day the user receives the recommendation.

\item[60-day minutes.] The total number of minutes the user listened to the recommended show in the same 60-day window.
\end{description}
Figure \ref{subfig:discovery-tars-core} shows average treatment effects among impacted users.
There are three main findings:

\begin{figure}[t]
\centering
\begin{subfigure}{\textwidth}
    \centering
    \caption{User metrics}
    \vspace{3mm}
    \label{subfig:discovery-tars-core}
    \includegraphics{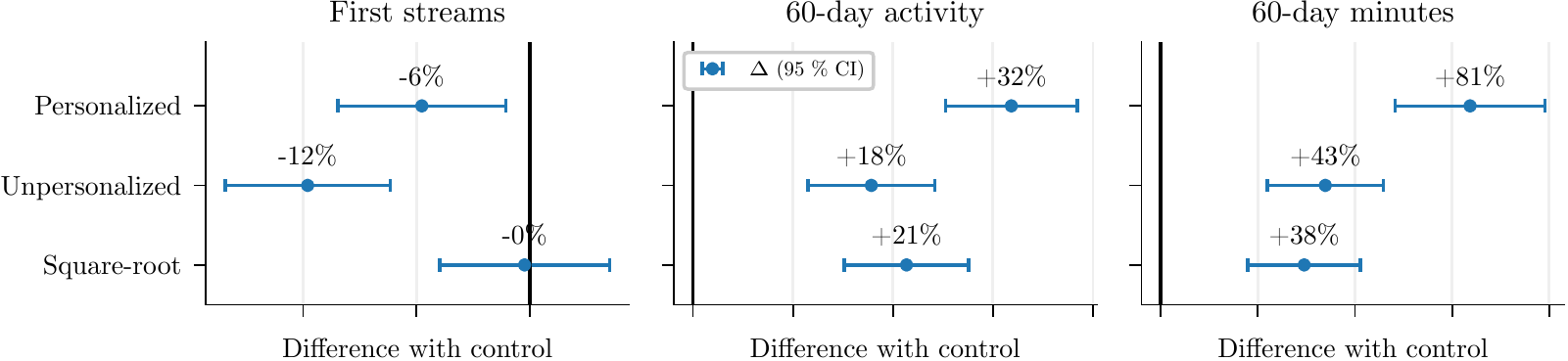}
\end{subfigure}\\
\vspace{5mm}
\begin{subfigure}{.5\textwidth}
    \centering
    \caption{60-day minutes, normalized by Control}
    \vspace{3mm}
    \label{subfig:discovery-tars-distribution}
    \includegraphics{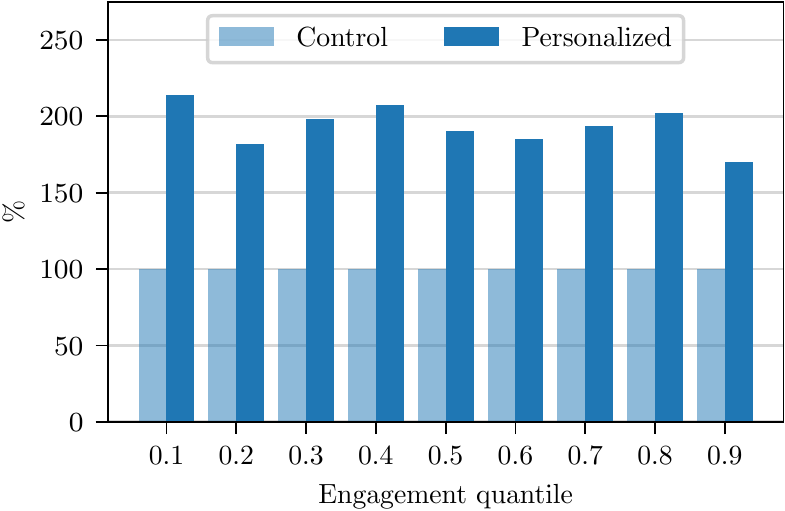}
\end{subfigure}\hspace{15mm}%
\begin{subfigure}{.4\textwidth}
    \centering
    \caption{Calibration of long-term value}
    \vspace{3mm}
    \label{subfig:discovery-tars-calibration}
    \includegraphics{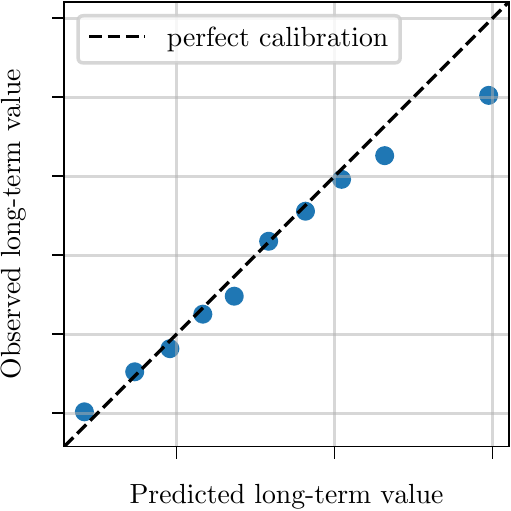}
\end{subfigure}
\caption{Results from an online experiment on the Home banner.}
\label{fig:banner_experiment}
\end{figure}

\begin{enumerate}
\item Optimizing for the long-term is different from optimizing for the short-term.
The long-term recommendation policy largely selected different items than control.
The number of first-streams of the recommended item, which is purposefully optimized by control, was reduced by optimizing for the longer term.

\item Explicitly grappling with the long-term goal leads to large impact.
Relative to the control group, users in the long-term personalized treatment had an 81\% increase in 60-day show minutes and a 32\% increase in 60-day show-active days.
Since these are average treatment effects, one might be concerned that it is driven by a small cohort of users that listen for a disproportionately long time.
In fact, Figure \ref{subfig:discovery-tars-distribution} reveals that the treatment cell created many more moderately engaged users.
For instance, that figure shows that median 60-day listening minutes were more than 80\% larger in the long-term personalized group than in the control group.

\item Causal assumptions were validated experimentally.
The stickiness vectors described in Section \ref{subsubsec:discovery_stickiness} were trained using data from users who discovered the items anywhere on Spotify, including search. 
Our methodology assumed, implicitly, that users who discovered the show through an advertisement on the banner would have similar downstream behavior. (This is implied by Assumption \ref{assumption:markov-in-time})
Figure \ref{subfig:discovery-tars-calibration} confirms this.
The long-term predictions of our models are almost perfectly calibrated to the results observed in the test.
\end{enumerate}

\subsubsection{Impact on holistic, app-level metrics in  larger and longer experiment}
\label{sec:discovery-home}

A second experiment tested a larger and more persistent change in the recommendation policy.
The horizontal row of recommendations labeled ``Shows you might like'' in Figure \ref{fig:screenshot-shelf} is called a \emph{shelf}.
Each item is called a \emph{card}.
The first two cards of the shelf are immediately displayed on the screen, and users can scroll to the right in order to reveal more cards.
The experiment ran for nine weeks, and during this time, the shelf was pinned near the top of the screen, as illustrated in Figure~\ref{fig:screenshot-shelf}.
Two aspects distinguish this experiment from the one described in Section~\ref{sec:discovery-tars}.
First, in this experiment tens of thousands podcast shows are eligible to be recommended on the shelf.\footnote{%
A candidate generation model returns a smaller, personalized pool of approximately a hundred candidate shows for each user.
In our experiment, this model is fixed, and our intervention focuses on a second stage where the candidates are ranked.}
As such, different recommendation policies almost always result in a different ranking.
Second, whereas the Home banner experiment studied the impact of a single recommendation, in this experiment we rank multiple recommendations at once.
We display the shelf every time the user opens the Home screen, over the entire duration of the test.
As a consequence, users have many opportunities to interact with the recommendations, and we expect this experiment to exhibit a significantly larger impact on users' overall podcast habits.
The Spotify \href{https://engineering.atspotify.com/2020/10/spotifys-new-experimentation-platform-part-1/}{experimentation platform} was used to assign users to one of three groups, uniformly at random, and assignments are fixed through the test period.  
\begin{description}
\item[Control.] Users in the control group received recommendations that were ranked in decreasing order of predicted listening probability, similar to Section \ref{subsubsec:fine-tuned-short-term}.

\item[Personalized.] Users in the personalized long-term treatment group received recommendations that were ranked in decreasing order of the $Q$-values in Section \ref{subsubsec:discovery_stickiness}.

\item[Square-root.] Users in this group received recommendations ranked in decreasing order according a predicted impression-to-listen probability times the square-root of a personalized stickiness estimate, similarly to the corresponding group in Section~\ref{sec:discovery-tars}.
\end{description}

Given that the intervention we study encompasses a large number of recommendation over a long period of time, we measure user metrics that capture users' overall engagement with podcasts on the platform.
We focus on the following three metrics.
\begin{description}
\item[Overall consumption in Week 8.] The total number of podcast minutes the user listened to during the week leading to day 56 after first exposure.

\item[Consumption from discoveries.] The total number of minutes the user listened to podcasts discovered anywhere on the app during the first six weeks after first exposure.

\item[At least one lasting discovery.] Whether the user discovered, anywhere on the app, at least one show that they listened to
\begin{enuminline}
\item on at least three separate days, and
\item for at least 2 hours in total,
\end{enuminline}
during the first six weeks after first exposure.
\end{description}

Altering the ranking policy for that single shelf had a substantial impact, as displayed in Figure~\ref{fig:discovery-home-holistic}.
Week-8 overall podcast listening minutes on the app were 1.7\% greater among users in the long-term, personalized treatment group.
Consumption from podcasts discoveries in the first six weeks increased by 6.2\%.
The number of users who had a ``lasting'' podcast discovery during that time period increased by 5.4\%.
These results suggest that the insights obtained using the clean but small intervention studied in Section~\ref{sec:discovery-tars} carry over to practical, complex recommender systems:
Explicitly driving podcast recommendations towards long-term engagement leads to substantial impact.

In Figure~\ref{fig:discovery-home-timeseries} we plot the impact of the two treatment group on the weekly total number of podcast minutes consumed, as a function of the number of weeks since users' first exposure.
The value at week 8 is identical to the metric displayed in Figure~\ref{fig:discovery-home-holistic} (left).
This time series highlights the fact that the rewards of successful recommendations early in the test take many weeks to fully materialize.
Measuring the effect of long-term policies at steady state requires long-running experiments, and naive metrics measured early on might severely underestimate the true long-term impact.

\begin{figure}[t]
\centering
\begin{subfigure}{\textwidth}
    \centering
    \caption{User metrics}
    \vspace{3mm}
    \label{fig:discovery-home-holistic}
    \includegraphics{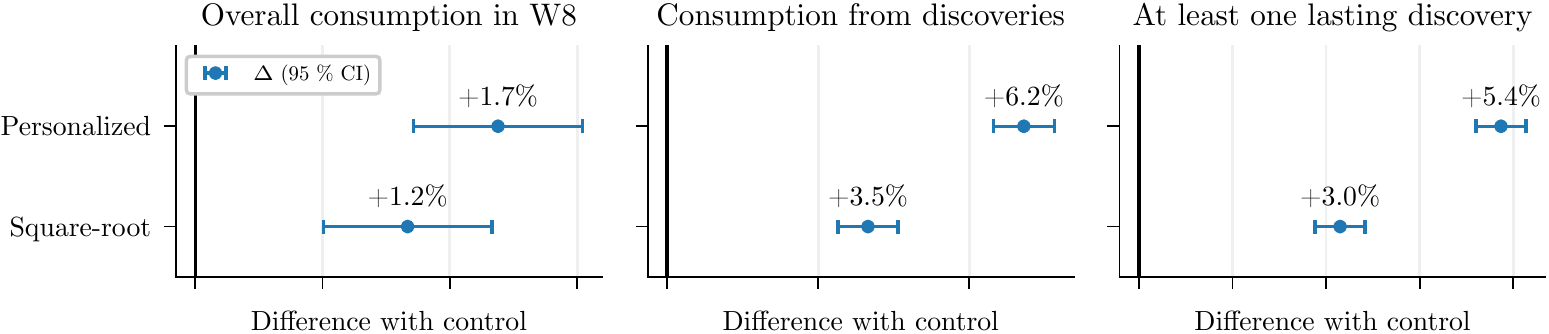}
\end{subfigure}\\
\vspace{5mm}
\begin{subfigure}{\textwidth}
    \centering
    \caption{Overall podcast consumption in $n$th week}
    \vspace{3mm}
    \label{fig:discovery-home-timeseries}
    \includegraphics{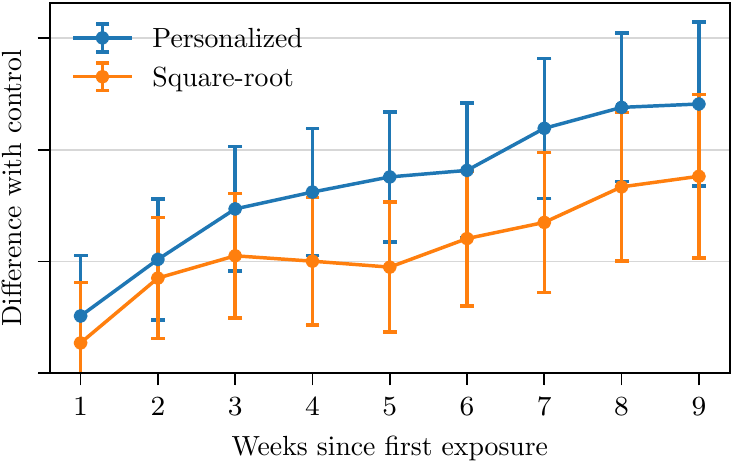}
\end{subfigure}
\caption{Results from an online experiment on the Home podcast discovery shelf.}
\label{fig:discovery-home}
\end{figure}

\subsection{Podcast resurfacing} 
\label{sec:resurfacing-proto}

We have presented a real-world implementation that optimizes recommendations of unfamiliar items---those the user has never previously listened to. This service helps users discover items they enjoy---signaled by their choice to return to it later. Here we present empirical results for the problem of impactful resurfacing items the user tried previously. We again rely on the formula for the $Q$-value presented in Corollary \ref{cor:Q-function-prototype}, which applies seamlessly to both discovery of new items and resurfacing of familiar ones. 

As discussed at the beginning of the section, we present offline correlational results that hint at the promise of the general methodology, but leave open the question of whether this will bear out experimentally, as it did in discovery case.

\subsubsection{Offline analysis}
\label{sec:resurfacing-eval}

In order to build a qualitative understanding of different approaches to recommending familiar content, and to illustrate the opportunities that our counterfactual, long-term approach opens up, we analyze recurring interaction data of Spotify users with three podcast shows. 

For purposes of illustration, we ignore the user taste representation $u_t$ and the context $X_t$, emphasizing the role of content-relationship states. The $Q$-value formula from Corollary \ref{cor:Q-function-prototype} can then be rewritten as
\begin{equation}
\begin{aligned}
\label{eq:resurfacing-qval-simplified}
Q_{\pi^0}(S_t,a) =
& \underbrace{\Big[\mathbb{P}_{\pi^0}(C_{t,a}> 0 | Z_{t,a}, A_{t,\star}=a) -  \mathbb{P}_{\pi^0}(C_{t,a}> 0 | Z_{t,a}, A_{t,\star}\neq a)  \Big]}_{\text{impact of recommendation on listening probability}}  \\
& \qquad\qquad \times \underbrace{\left[ \left(1 + V^{(a)}_{\pi^0}(Z_{t+1,a}^{+}) \right) - \left(0 + V^{(a)}_{\pi^0}(Z_{t+1,a}^{-}) \right)  \right]}_{\text{impact of listening on long-term engagement}}.
\end{aligned}
\end{equation}

For purposes of illustration, we train non-parametric models on an aggregated representation of the content-relationship state $Z$ for several distinct podcast shows $a$. Details are provided in Appendix \ref{sec:resurfacing-eval-details}. We visualize the patterns learned by these models in Figure~\ref{fig:resurfacing}.
Each plot represents values for a hypothetical user that has been using the Spotify service for 4 weeks.
The $xy$-axes corresponds to a two-dimensional projection of the content-relationship state;
For example, the cell at index $(2, 3)$ corresponds to a user that was active 20\% of the days in weeks 1 \& 2, and 10\% of the days in weeks 3 \& 4.
The value at a given cell is computed by averaging model estimates corresponding to all engagement states with the prescribed activity ratio during the two periods.
Note that the bottom-left corner, i.e., position $(1, 1)$, corresponds to the engagement state associated to the discovery setting (the user has never engaged with this item before).
The plots visualize the following quantities as a function of $z$, for all three podcasts under consideration.
\begin{itemize}
\item Figure~\ref{fig:resurfacing-prob} represents the factual and counterfactual probability of listening, $\mathbb{P}_{\pi^0}(C_{t,a}> 0 | Z_{t,a}, A_{t,\star}=a)$ and $\mathbb{P}_{\pi^0}(C_{t,a}> 0 | Z_{t,a}, A_{t,\star}=a) -  \mathbb{P}_{\pi^0}(C_{t,a}> 0 | Z_{t,a}, A_{t,\star}\neq a)$, respectively.

\item Figure~\ref{fig:resurfacing-val} represents the factual and counterfactual long-term value, $1 + V^{(a)}_{\pi^0}(z^+)$ and $[1 + V^{(a)}_{\pi^0}(z^+)] - [0 + V^{(a)}_{\pi^0}(z^-)]$, respectively, where $z^+ = \alpha \circ z + (1 - \alpha) \circ 1$ and $z^- = \alpha \circ z + (1 - \alpha) \circ 0$.

\item Figure~\ref{fig:resurfacing-qval} represents the $Q$-value~\eqref{eq:resurfacing-qval-simplified}.
This corresponds to the product of the probability of listening and long-term value counterfactuals.
\end{itemize}

\begin{figure}[p]
\begin{subfigure}{\textwidth}
    \centering
  \caption{Probability of listening}
  \label{fig:resurfacing-prob}
  \vspace{2mm}
  \includegraphics{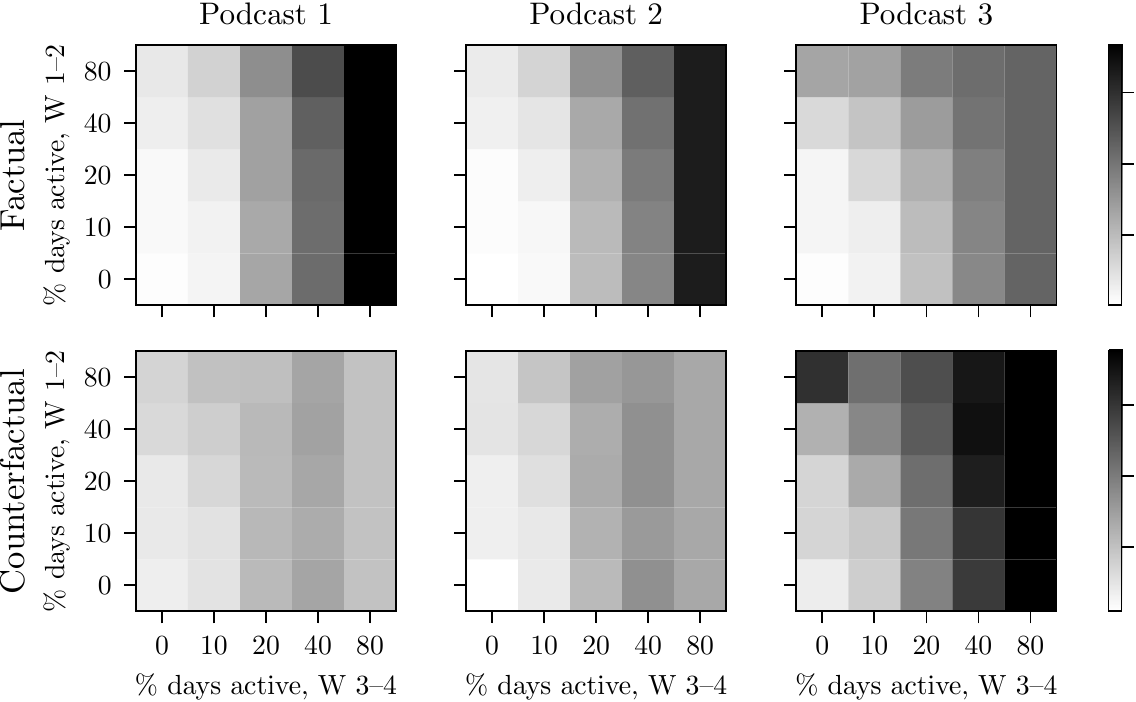}
\end{subfigure}\\
\begin{subfigure}{\textwidth}
    \centering
  \caption{Long-term value}
  \label{fig:resurfacing-val}
  \vspace{2mm}
  \includegraphics{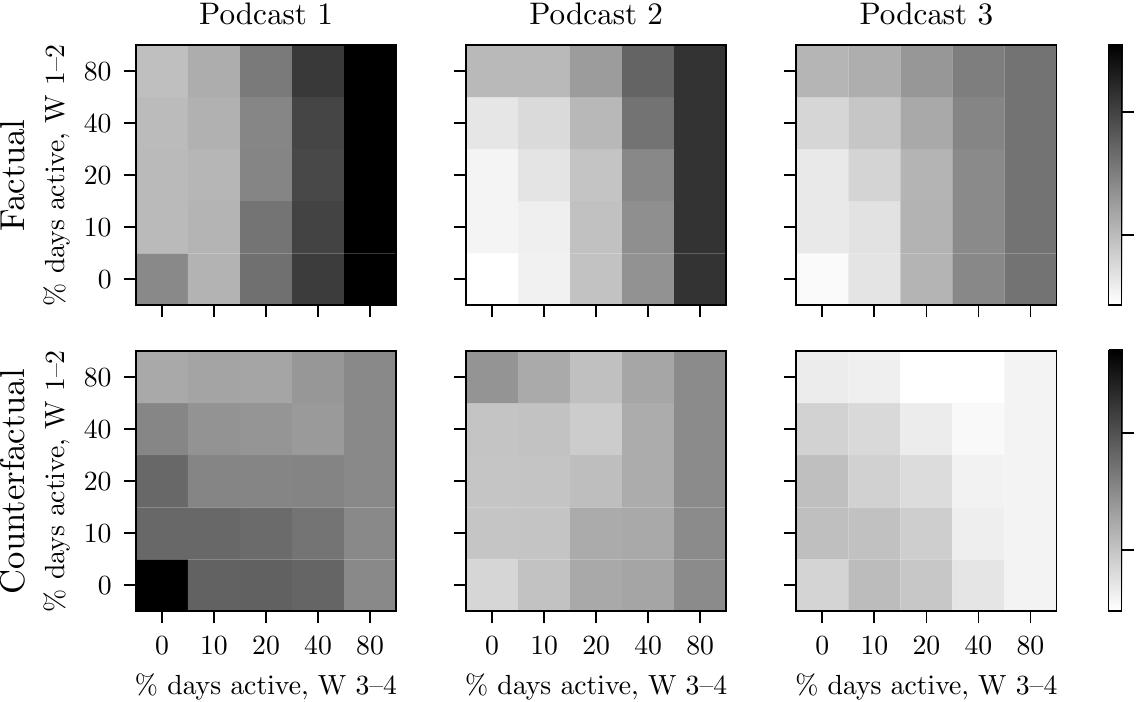}
\end{subfigure}\\
\begin{subfigure}{\textwidth}
    \centering
  \caption{$Q$-value}
  \label{fig:resurfacing-qval}
  \vspace{2mm}
  \includegraphics{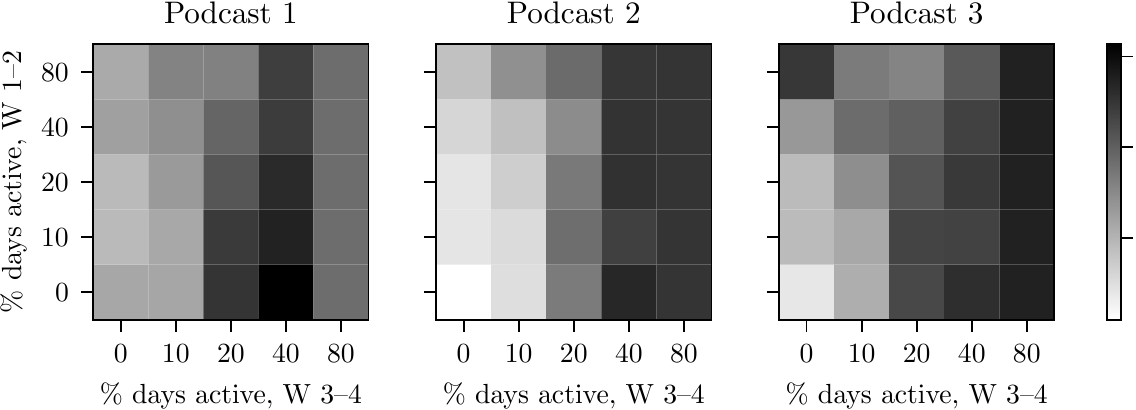}
\end{subfigure}
\caption{Resurfacing quantities.}
\label{fig:resurfacing}
\end{figure}

Collectively, these reveal several important insights.
First, \emph{counterfactual reasoning is important}.
Content that the user listened to recently is systematically more likely to be listened to today, but a lot of these listens would occur organically anyway, even without a recommendation.
A similar observation holds for the long-term value:
High recent engagement is likely to lead to high engagement in the future, irrespective of whether or not a listen occurs today.
In contrast, the counterfactuals (second row in Figures~\ref{fig:resurfacing-prob} and~\ref{fig:resurfacing-val}) highlight that the highest \emph{impact} on short-term and long-term outcomes is often obtained in lower engagement states. This finding differs from the discovery case treated earlier.

Second, there is \emph{significant heterogeneity across items and states}.
Focusing on Figure~\ref{fig:resurfacing-qval}, we observe that, for a given state, there is a large difference in the predicted impact of a recommendation across items.
This is most easily observed for the no-engagement state (bottom-left cell), for example:
A recommendation of Podcast~1 appears to be more valuable than the two other items.
Likewise, for any given item, there are substantial variations in predicted impact across states, and the patterns are not identical across shows.
As a consequence, the relative ranking of items depends on past engagement.
While a recommendation for Podcast 1 has higher value in the no-engagement state (bottom-left cell), a recommendation for Podcast 3 has higher value for users who are in a high past-engagement, low recent-engagement state (top-left cell).
Similar observations hold for Figures~\ref{fig:resurfacing-prob} and~\ref{fig:resurfacing-val}.

Third, the \emph{short-term counterfactual and full $Q$-value are misaligned}.
Reasoning about the downstream impact of short-term outcomes often leads to distinctly different decisions.
For example, Figure~\ref{fig:resurfacing-prob} seems to suggest that recommending Podcast 3 to users with high recent engagement is always the best course of action, by a large margin.
In fact, Figure~\ref{fig:resurfacing-qval} reveals that, once long-term impact is accounted for, there are item-state pairs that might lead to recommendations with a larger impact.

\subsubsection{Open questions around experimental evaluation}
\label{sec:resurfacing-open}

The offline analysis in Section~\ref{sec:resurfacing-eval} helps justify the most important aspects of our methodology.
In particular, the analysis highlights the importance of optimizing for the long-term, of counterfactual evaluation, and of capturing heterogeneity across both items and user relationship states.
For discovery recommendations---a special case of the general methodology---A/B tests described in Sections~\ref{sec:discovery-tars} and~\ref{sec:discovery-home} validate causal assumptions and show large product impact. As discusses at the start of this section, 
 we do not yet have such evidence for recommendations of familiar items.

\section{Revisiting attribution, coordination, and measurement challenges}\label{subsec:challenges-revisited}

We now describe how our methods cope with the challenges  described in Section \ref{subsec:challenges}.  We discuss attribution and coordination challenges only at a high level, mentioning some practical ways the real implementations deal with these issues. 

Appendix \ref{sec:policy_improvement_theory} reviews broader theory of methods that model and optimize the same $Q$-function. This theory suggests some ability to cope with attribution and coordination challenges is due to this high-level goal, rather than our more specialized modeling of the $Q$-function in Section \ref{sec:structural}.

However, our experience suggests that this more specialized modeling played an essential role in overcoming measurement challenges (i.e. in enabling data-efficient solutions).  To support this, Section \ref{sec:samplesize}  provides  an empirical study of data requirements under several $Q$-function estimators.   

\subsection{High-level discussion}\label{subsec:challenges_revisited_intuition}

\paragraph{Attribution.} As discussed in Section \ref{subsec:challenges}, it is subtle to attribute credit for long-term user outcomes to individual recommendation decisions. 
The formula in Corollary \ref{cor:Q-function-prototype} deals with this issue in an intuitively compelling way. Individual recommendations are credited with \emph{changes} in predicted item-level listen habits, and receive no credit for listening that would be anticipated regardless. More abstract theory of reinforcement learning also clarifies a sense in which $Q$-functions coherently attribute credit among individual actions, assuming they are estimated perfectly; see Appendix \ref{subsec:attribution_theory} for a review.  

\paragraph{Coordination.}

The formula in \eqref{eq:resurfacing-qval} aligns conveniently with an organization in which individual teams maintain their own short-term models of user interactions (e.g. responses to banner promotions in Figure \ref{fig:screenshots}, or search results). In practice today, long-term stickiness models which are maintained by a centralized team and are used to augment these distinct short-term models. 

Notice also that definition of the partial $Q$-function defined in \eqref{eq:partialQ} and the corresponding policy improvement goal seem to inherently involve decentralized logic.  It directs an individual team (e.g. the one in charge of banner promotions) to take actions which improve long-term outcomes \emph{in the context of the current incumbent policy controlling the recommender system}. Section \ref{subsec:coordination_theory} reviews a precise, mathematical, interpretation of this policy improvement update as (implicitly) performing coordinate ascent in the space of a policies, an approach ideally suited to decentralized systems.

\paragraph{Measurement.}  
Our domain-specific modeling enhances data-efficient estimation of the Q-function in \eqref{eq:partialQ} through two key aspects. First, we focus on the formation of item-level listening habits. Instead of correlating recommendations with overall `rewards' accumulated over a user's lifetime of app interactions, we measure how a recommendation contributes to habits specific to the recommended item. This focused measurement allows us to capture outcomes directly related to the recommendation, minimizing noise from unrelated user behaviors. Second, our modeling enables the use of distinct datasets and features for short-term engagement and long-term stickiness models. To address the higher variance in long-term outcomes, our stickiness models aggregate data across extended time periods and interactions with many components of the app. The following subsection empirically demonstrates how these aspects significantly boost data efficiency.

\subsection{Empirical analysis of data efficiency}
\label{sec:samplesize}

In this section, we demonstrate the severe measurement challenges inherent in estimating long-term effects of recommendations and illustrate how our approach dramatically reduces sample size requirements. These challenges are fundamental to optimizing for long-term outcomes in any large-scale recommendation system. The core issue lies in distinguishing the impact of individual recommendations from the natural ``noise'' in user behavior, which, as mentioned in the introduction, is akin to measuring the effect of a single meal on lifetime health. We consider four different value estimation methods, and study their sample complexity using randomized recommendation data collected at Spotify.

\subsubsection{Re-purposing A/B test data to demonstrate challenges in Q-value estimation}
Ideally, we would like to demonstrate the difficulty of statistically distinguishing the Q-values of two different actions using a direct, assumption-free estimation strategy. To illustrate this point, we would ideally construct a direct demonstration by recommending individual podcast shows (the actions $a$) to millions of users who have similar features (i.e., similar states $s$) and tracking downstream metrics. Direct $Q$-value estimation would average a holistic, long-term reward among users receiving each recommendation action; for instance, one might average total minutes of listening across all podcasts over the 60-days post-recommendation. Even at Spotify, randomized data of this type at massive scale is unrealistic (say, millions of recommendations for each specific combination of user cohort and show). Granular personalization and a large catalogue of items to potentially recommend 
means that the effective sample size is not so large. This is one reason we have emphasized measurement challenges.

To come up with a compelling demonstration, we construct a setting that is much simpler than what would be encountered in practice, yet still reveals significant challenges. We repurpose data from the A/B test described in Section~\ref{sec:discovery-tars}, adopting a different perspective. Instead of defining actions as show recommendations, we define them as assignments to either control or treatment policies within the test. In this test, a user received a single podcast show recommendation the next time a banner promotion appeared---the treatment assignment only determines which policy made the recommendation.  Effectively, then, assignment to either the control policy $A$ or treatment policy $B$ is a kind of randomized meta-action, that can be viewed as randomly selecting among two pre-determined potential recommendations targeted at that user. We call A and B ``meta-actions''.

 This approach offers two key advantages that simplify our analysis. First, it allows us to compare two recommendation actions with highly differentiated long-term performance that were randomly made to millions of users. Second, since either met-action is already `implicitly' personalized, it is reasonable to compare unpersonalized variants of the $Q$-values\footnote{In measuring population-averages, the quantity we estimate is $Q(a)= \frac{1}{|U|}\sum_{u \in U} Q(S^u, \pi_{a}(S^u))$ for $a \in \{A,B\}$, where $U$ denotes the set of users included in the test, $S^u$ is the state of user $u$ at recommendation time, and $\pi_{a}(S^u)$ denotes either the control ($a=A$) or treatment ($a=B$) policy.}, denoted $Q(A)$ and $Q(B)$. This greatly reduces sample size requirements.

\subsubsection{Datasets used}
In the following, we call the Control or ``incumbent'' policy the meta-action $A$, and the Personalized treatment policy meta-action $B$.
For each action $a \in \{A, B\}$, we construct two datasets.
\begin{itemize}
\item The first dataset, $\mathcal{D}_a = \{(Y_i, R_i, G_i) : i = 1, \ldots, n_{\max} \}$,  contains outcomes of recommendations of item $i$, where $Y_i \in \{0, 1\}$ is a binary indicator for whether the user engages with the recommendation, $R_i$ is the number of 60-day minutes listened to the recommended show, and $G_i$ is the total 60-day minutes listened to all podcast shows across the entire Spotify service.  
\item The second dataset, $\mathcal{D}'_a = \{ R_j : j = 1, \ldots, 7000 \}$, contains 60-days minutes listened to shows discovered via action $a$. This is meant to resemble\footnote{In practical applications of our methodology, this dataset is created from outcomes of past discoveries across the entire Spotify service (see Section~\ref{subsubsec:discovery_stickiness}). Because the test was run long-ago,  that large dataset is no longer saved, and it very costly to reproduce it. Thankfully, for the purposes of studying sample complexity, we can re-use a subset of the first dataset, imagining it was from an auxiliary source---available prior to the test and independent of the outcomes in $\mathcal{D}_a$. Concerns about distribution-shift in the auxiliary data source are evaluated in Section \ref{sec:discovery-tars}. }
  auxiliary datasets used to estimate the stickiness of show-listening post-discovery in Section \ref{subsubsec:discovery_stickiness}. We chose \num{7000} samples, since this is the median number of historical discoveries used to compute the show-stickiness in the experiment discussed in Section~\ref{sec:discovery-tars}.  
\end{itemize}
One difference here is that we work with listening minutes (i.e. reward $r(c)=c$ in \eqref{eq:item-level-value-func}), whereas our real implementations correspond to a reward function that tracks active listening days (i.e. reward $r(c) = \ind(c>0)$ in \eqref{eq:item-level-value-func}). The A/B test results report both measures.

\subsubsection{Estimation procedures evaluated}
We compare three methods for estimating the $Q$-value of a meta-action $a\in \{A,B\}$. 
\begin{description}
    \item[Holistic, long-term:] This approach aims to capture the broadest impact of a recommendation. It measures changes in overall user behavior across the entire platform following a recommendation. In our demonstration, this method tracks the total minutes of podcast listening across all shows over 60 days after the recommendation. Formally, this estimate is given by $Q_{\text{glt}}(a) = \lvert \mathcal{D}_a \rvert^{-1} \sum_{(Y, R, G) \in \mathcal{D}_a} G$.
    \item[Local, long-term:] This method directly measures the long-term effect of a recommendation on engagement with the recommended item. In our case, it tracks the total minutes users spend listening to the specifically recommended podcast show over 60 days. Formally, this estimate is given by $Q_{\text{llt}}(a) = \lvert \mathcal{D}_a \rvert^{-1} \sum_{(Y, R, G) \in \mathcal{D}_a} R$.
    \item[Our Approach:] This combines short-term and long-term data. It estimates the rate at which recommendations lead to initial engagement, and then multiplies this by a separate estimate of the long-term value of such engagements. The long-term value is estimated using historical data on listening minutes for users who engaged with similar recommendations in the past. This is analogous to \eqref{eq:discovery_stickiness_implementation}. Formally, estimate is given by  $Q_{\text{ours}}(a) = \lvert \mathcal{D}_a \rvert^{-1} \sum_{(Y, R, G) \in \mathcal{D}_a} Y \cdot \bar{v}_a$ where  $\bar{v}_a = \lvert \mathcal{D}'_a \rvert^{-1} \sum_{R \in \mathcal{D}'_a} R$. 
\end{description}
Under the assumptions of our theoretical modeling, all three estimators have the same expected value, but they have different variances. Our approach and the ``Local, Long-term'' approach reduce noise by focusing on engagement with the recommended item. This helps because it isolates the effect of the specific recommendation from the myriad of other factors that might influence a user's overall listening behavior. By concentrating on the recommended item, these methods filter out the ``noise'' of unrelated listening activities, potentially leading to more precise estimates of the recommendation's impact.

Our approach goes a step further by leveraging other datasets to estimate long-term engagement following a discovery, including data from discoveries that occurred across the platform over a long time-span. It is worth emphasizing that this section demonstrates the sample efficiency benefits of the assumptions underlying our methods; the real-world justification for the assumptions comes from demonstrated impact in A/B tests, including on overall user behavior across the entire platform.

\subsubsection{Findings} Figure~\ref{fig:tars-sample-complexity} shows $1\sigma$ confidence intervals for each of the three value estimates, for increasing sample size $n$. Despite our experience across multiple A/B tests indicating that meta-actions $A$ and $B$ have significantly different long-term performance (see e.g. Figure \ref{fig:discovery-home}), these differences are statistically indistinguishable under the ``Holistic, long-term'' $Q$ estimation, even with a sample size of $n=2,560,000$. Strikingly, the standard error of the holistic, long-term $Q$ estimation is over 350 times larger than our approach. This implies that {\bf more than 120,000 times as much data} would be required to statistically distinguish a difference in $Q$-values of a given size using the ``Holistic, long-term'' method compared to our approach.

The ``Local, long-term'' estimator also dramatically outperforms the ``Holistic, long-term'' estimator in terms of standard error. This highlights the substantial benefits of focusing on a recommendation's contribution to listening habits formed with the specific recommended item, rather than trying to detect its impact on overall platform usage.

Notice that our estimator's standard  error even further the ``Local, long-term''. That is because it leverages auxiliary data to estimate the stickiness of discoveries. This improvement is particularly impactful when the number of randomized recommendations $n$ is relatively small; this scenario occurs naturally for less popular or niche content that doesn't receive frequent recommendations.

\begin{figure}[t]
    \centering
    \includegraphics{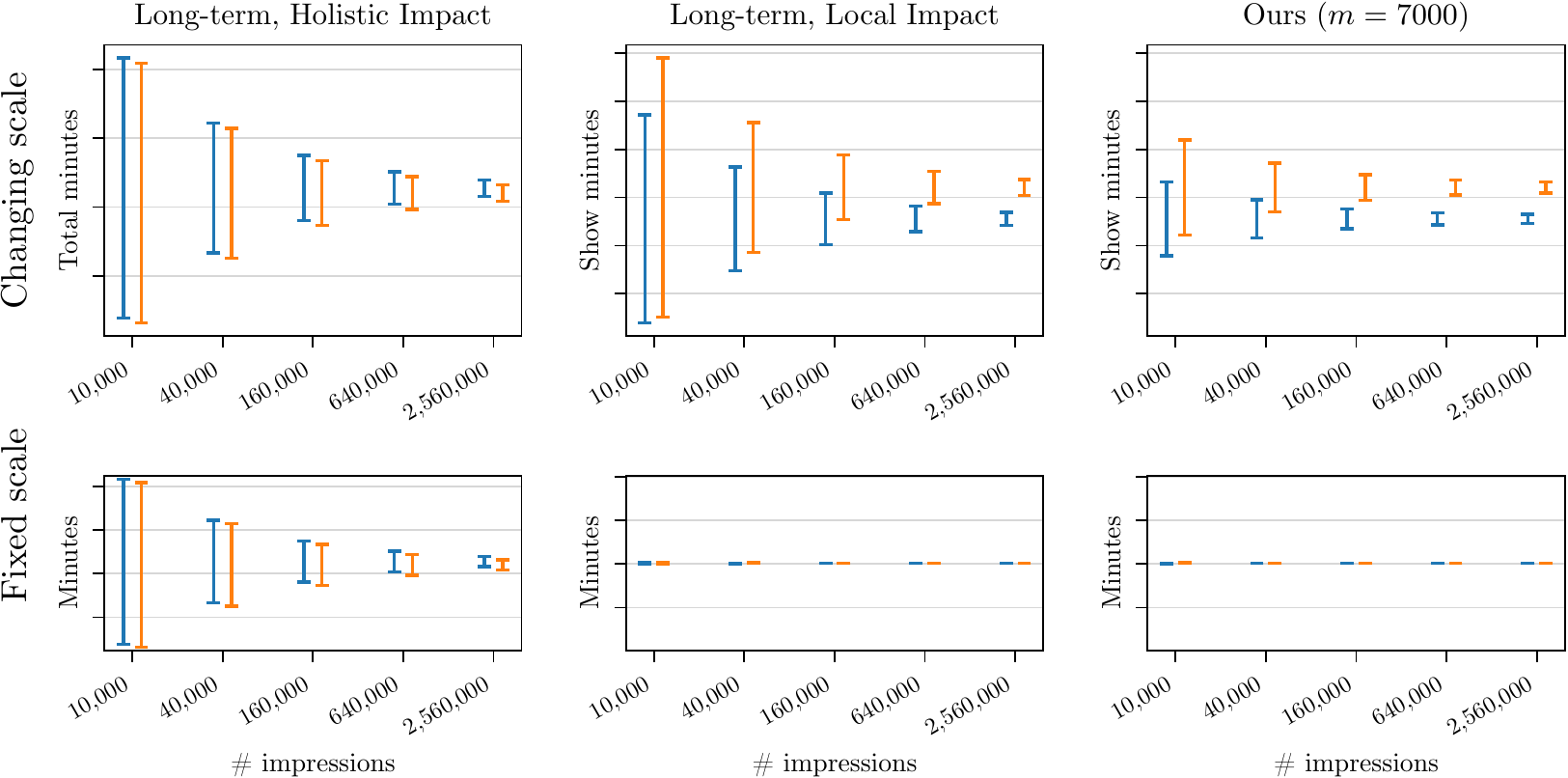}
    \caption{$1 \sigma$ confidence intervals for four different value estimates, as a function of sample size.
First row: $y$-axis scale varies for each plot.
Second row: $y$-axis scale is held constant across all three plots. That shows the standard errors of ``Holistic, long-term'' method are so enormous that everything else is negligible by comparison. }
    \label{fig:tars-sample-complexity}
\end{figure}

\section{Conclusion}

We have successfully optimized a component of an industrial-scale recommender system at Spotify for outcomes that occur over months, generating substantial impact even from a small alteration to the overall system. In this section, we offer some closing thoughts about some broader learnings that can be drawn from this experience.

To recommender systems practitioners, a field where myopic optimization is often the norm, our work offers tantalizing evidence that successful long-term optimization would result in large performance gains for many recommender systems, and perhaps digital platforms more broadly. We hope this spurs further energy in this direction.

To reinforcement learning researchers, who are focused anyway on the goal of optimizing for long-term outcomes, our work offers different learnings. Progress in this field is often driven by success in empirical benchmarks, and many benchmark problems are based on video games or robot simulators, where actions have large, easily detectable impacts. Where RL is applied in recommender systems, it is often used to optimize a very short session of interactions. For long-running user-interactions, measurement challenges become crippling. It's not that actions don't matter---large, persistent changes to policies still have substantial impact, but individual actions have mostly localized effects that are easily obscured by ``noise''. We managed to measure these localized effects through a focus on what we call ``item-level listening habits'', a solution built upon substantial domain-knowledge. A challenge for RL researchers is to develop algorithms that can automatically uncover solutions similar to our domain-specific approach.

\subsection*{Acknowledgments}

For their contributions to this work, we wish to thank Sainath Adapa, Shubham Bansal, Sonia Bhaskar, Dave Bredesen, Lauren Campbell, Kamil Ciosek,  Zhenwen Dai, Tonia Danylenko, David Gustafsson,  Tony Jebara, Jon King,  Mounia Lalmas-Roelleke, Crystal Lin,  Shawn Lin, Dylan Linthorne, Alan Lu, Andrew Martin, Roberto Sanchis Ojeda, Vladan Radosavljevic,  Oguz Semerci, Oskar Stal,   Tiffany Wu, and Cindy Zhang. 
We thank anonymous reviewers of the Reinforcement Learning for Real Life Workshop at NeurIPS 2022 for their feedback.
Daniel Russo would also like to thank Joseph Cauteruccio, Ian Anderson,  David Murgatroyd, and Clay Gibson, whose thoughts on reinforcement learning strategy influenced his thinking.

{\small
\setlength{\bibsep}{4pt plus 0.4ex}
\bibliography{references}
}

\appendix
\section{Background: embedded representations learned by pretraining on surrogate tasks}\label{sec:background}

At the core of many recommender systems is a vector embedding of users and items.
A user vector at the start of day $t$, $u_t \in \mathbb{R}^d$, encodes useful information about a user and their tastes.
For an item $a$ (e.g. a particular podcast show), the item vector $\nu_{a} \in \mathbb{R}^d$ encodes information about which users are likely to engage with the item.
The dot product $\nu_a^\top u_t$ represents a user's ``affinity'' for item $a$.
We will think of $u_t$ as capturing the system's understanding of the user's overall tastes and a user-item affinity score captures the user's propensity to have a short-term engagement with that type of item.
A simple default recommendation strategy would be to display to the user the items with which they have greatest affinity.

At times in this paper, we will assume access to a trained representations of users and items. 
We do not describe the specific systems in use at Spotify. But we briefly overview an influential methodology described in \cite{covington2016deep}, providing some background on a deep-learning based user/item embedding procedures. To train a neural network that produces user and item vectors, they design a surrogate multi-class classification problem.
To produce a training example, they randomly  select one video the user watched on a given day from the set of videos watched.
The ID of the selected video is the label. The network takes information on the user's features and their interactions with the app as input and is trained to output probabilities (for each item in a restricted corpus) that minimize cross entropy loss in predicting the label.
The output of the network's last hidden layer is a $256$ dimensional user vector and the trained weights of the final layers are the $256$ dimensional item vectors.
One can interpret $\exp\{u_t^\top \nu_a\}$, the exponentiated affinity for item $a$, as being proportional to the predicted probability that $a$ is the correct label.
It roughly captures their relative likelihood of engaging with items that have similar representations in the future.

Figure \ref{fig:youtube} shows the network architecture in more detail.
As input, a collection of the id's of recent watched videos and search tokens.
An embedding layer transforms one-hot encodings of the video ids and search tokens into dense vectors, which are then averaged, producing the ``watch vector'' and ``search vector'' in Figure \ref{fig:youtube}.
This is concatenated with other user features, and transformed through three hidden layers.
Because of the averaging, we expect that user vectors change slowly if a long history of watches and searches is provided as input. In this case, we think of user vectors as encoding their long-running tastes.

\begin{figure}[t]
    \centering
    \setlength{\fboxsep}{1pt}%
    \setlength{\fboxrule}{1pt}%
    \fbox{\includegraphics[width=.5\textwidth]{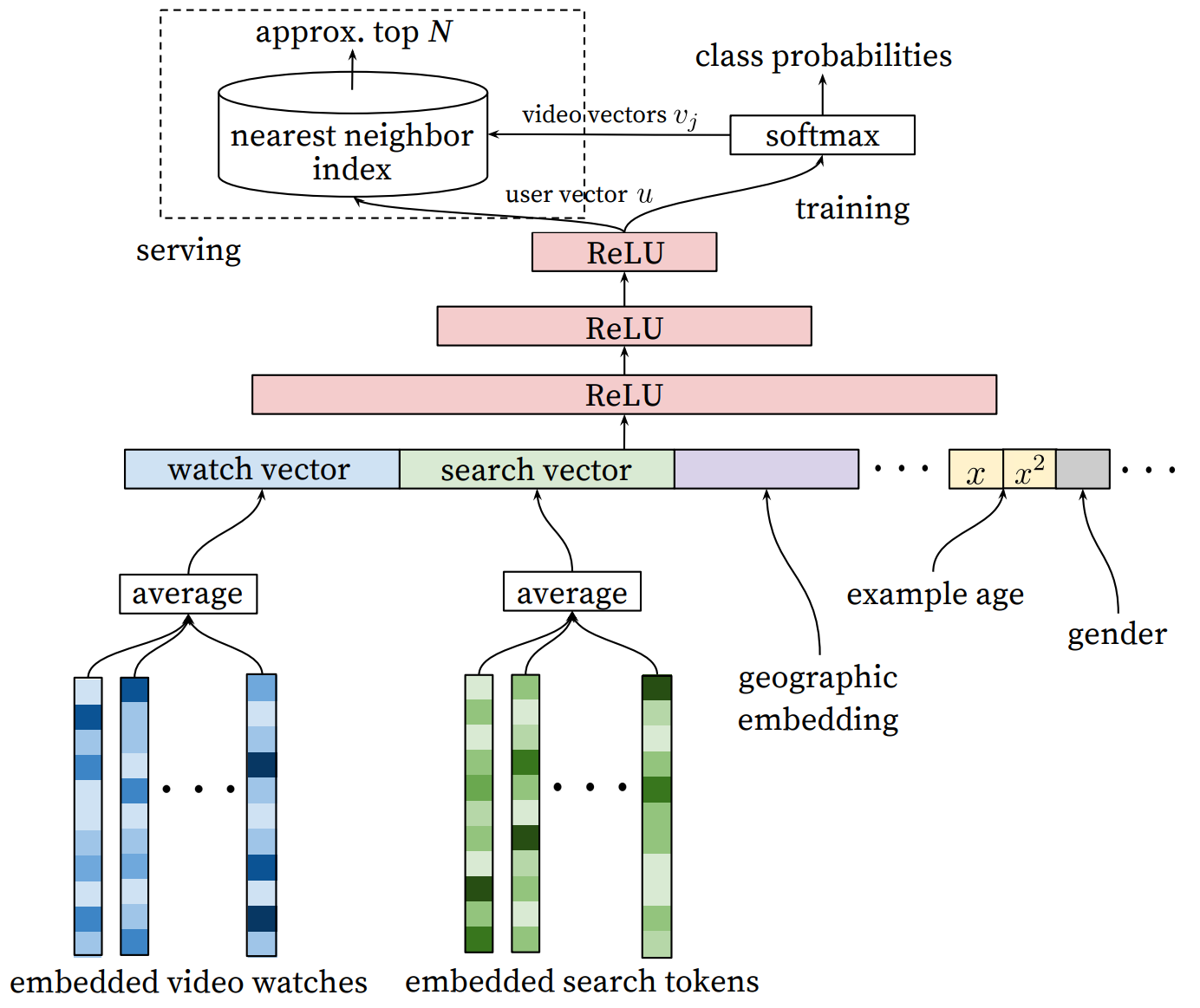}}
    \caption{
        Figure appearing in \cite{covington2016deep} describing a system for candidate generation at YouTube.}
        \label{fig:youtube}
\end{figure}

\section{Proof of Theorem \ref{thm:main}}

\begin{proof}
    Without loss of generality, we take $t = 0$.
    Using first using the separable form the reward in \eqref{eq:separable-rewards}, then Assumption \ref{assumption:direct-short-term-impact}, we find
    \begin{align*}
        Q_{\pi^0}(S_{0}, a)
        &= \mathbb{E}_{\pi^0}\left[  \sum_{\tau=0}^{T_1}  R\left( Y_{\tau}, A_{\tau} \right) \mid A_{0, \star}=a ,\, S_{0} \right] \\
        &= \mathbb{E}_{\pi^0}\left[  \sum_{\tau=0}^{T_1}\sum_{a'\in \Abb}  r(C_{\tau,a'})      \mid A_{0, \star}=a ,\, S_{0} \right] \\
        &= \sum_{a'\in \Abb}  \mathbb{E}_{\pi^0}\left[  \sum_{\tau=0}^{T_1}   r(C_{\tau,a'})      \mid A_{0, \star}=a ,\, S_{0} \right] \\
        &= \underbrace{\sum_{a'\in \Abb_{\star}}  \mathbb{E}_{\pi^0}\left[  \sum_{\tau=0}^{T_1}  r(C_{\tau,a'})      \mid A_{0, \star} \neq a' ,\, S_{0} \right]}_{:= b_{\pi^0}(S_0) } \\
        &\quad +  \left(  \underbrace{\mathbb{E}_{\pi^0}\left[  \sum_{\tau=0}^{T_1}   r(C_{\tau,a})      \mid A_{0,\star} =  a  ,\, S_{0} \right]}_{ := (*)} - \underbrace{\mathbb{E}_{\pi^0}\left[  \sum_{\tau=0}^{T_1}   r(C_{\tau,a})      \mid A_{0, \star} \neq  a ,\, S_{0} \right]}_{:=(**)} \right).
    \end{align*}
    We now simplify the term (*) by applying Assumption \ref{assumption:markov-in-time}.
    We have,
    \begin{align*}
        &\ \mathbb{E}_{\pi^0}\left[  \sum_{\tau=0}^{T_1}    r(C_{\tau,a})      \mid A_{0, \star} =  a  ,\, S_{0} \right] \\
        =\quad& \mathbb{E}_{\pi^0}\left[ r(C_{0,a})    +  \ind(T_1\geq 1)\sum_{\tau=1}^{T_1}  r(C_{\tau,a})      \mid A_{0, \star} =  a,\, S_{0} \right] \\
        \overset{(a)}{=} \quad & \mathbb{E}_{\pi^0}\left[ r(C_{0,a})    +  \ind(T_1\geq 1) \mathbb{E}_{\pi^0}\left[  \sum_{\tau=1}^{T_1}    r(C_{\tau,a})  \mid Z_{1,a}, S_{0}, \ind(A_{0, \star}=a)=1 , \,  T_1 \geq  1 \right]     \mid A_{0,\star} =  a ,\, S_{0} \right] \\
        \overset{(b)}{=} \quad& \mathbb{E}_{\pi^0}\left[ r(C_{0,a})    +  \ind(T_1\geq 1) \mathbb{E}_{\pi^0}\left[  \sum_{\tau=1}^{T_1}    r(C_{\tau,a})  \mid Z_{1,a}, S_{0}  ,\,  T_1 \geq  1 \right]     \mid A_{0,\star} =  a ,\, S_{0} \right] \\
        \overset{(c)}{=} \quad& \mathbb{E}_{\pi^0}\left[ r(C_{0,a})    +  \ind(T_1\geq 1) \mathbb{E}_{\pi^0}\left[  \sum_{\tau=1}^{T_1}    r(C_{\tau,a})  \mid Z_{1,a}, S_{0} , \,  S_1 \neq \varnothing \right]     \mid A_{0,\star} =  a ,\, S_{0} \right] \\
        \overset{(d)}{=} \quad& \mathbb{E}_{\pi^0}\left[ r(C_{0,a})    +  \ind(T_1\geq 1) V^{(a)}_{\pi^0}\left( Z_{1}^{(a)} \,,\, S_0\right)   \mid A_{0,\star} =  a \,,\, S_{0} \right]   \\
        \overset{(e)}{=} \quad& \mathbb{E}_{\pi^0}\left[ r(C_{0,a})    +  \gamma V^{(a)}_{\pi^0}\left( f\left(Z_{0,a}, C_{0,a} \right)   \,,\, S_0\right)   \mid A_{0,\star} =  a \,,\, S_{0} \right].   \\
    \end{align*}
    Step (a) above uses the law of iterated expectations.
    Step (b) follows from Assumption \ref{assumption:markov-in-time}.
    Step (c) is just a notation change. The statement $T_1\geq 1$ is equivalent to the notation $S_1\neq \varnothing$.
    Step (d) applies the definition of the item-level value function.
    Step (e) applies the incremental definition of the content-state, $Z_{t+1,a}=f(Z_{t,a} \,,\, C_{t,a})$ together with the assumption that user lifetimes follow a geometric distribution that is drawn independently from the random shocks $(\epsilon_t)$ and $(\xi_{t})$ that determine all other variables  (See Sec.~\ref{sec:formulation}).

    Applying the same steps to simplify (**) yields
    \begin{align*}
        Q_{\pi^0}(S_0,a) = b_{\pi^0}(S_t) &+  \mathbb{E}_{\pi^0}\left[ r(C_{0,a})    +  \gamma V^{(a)}_{\pi^0}\left( f\left(Z_{0,a}, C_{0,a} \right)   \,,\, S_0\right)   \mid A_{0,\star} =  a,\, S_{0} \right] \\
        &-  \mathbb{E}_{\pi^0}\left[ r(C_{0,a})
        +  \gamma V^{(a)}_{\pi^0}\left( f\left(Z_{0,a}, C_{0,a} \right)   \,,\, S_0\right)   \mid A_{0,\star} \neq  a,\, S_{0} \right].
    \end{align*}
\end{proof}

\section{Details of the offline experiment in Section \ref{sec:resurfacing-eval}}\label{sec:resurfacing-eval-details}

We detail the estimation used in Section  \ref{sec:resurfacing-eval}.
Similarly to the discovery setting, we collect two datasets.
\begin{enumerate}
\item The first dataset $\Dscr = \{ A, u, Z_a, \ind(Y > 0) \}$ contains immediate outcomes to recommendations.

\item For every item $a$, we gather a second dataset $\Dscr_a = \{ u, Z_a, \ind(C_a > 0), R \}$ that contains user state information, and immediate and long-term organic consumption.
$\ind(C_a > 0)$ indicates a stream of item $a$ occurred on a given day, and $R \in \{0, \ldots,  59\}$ is the number of return days to the item, similarly to Section~\ref{subsubsec:discovery_stickiness}.
\end{enumerate}
For purposes of illustration, we train non-parametric models on an aggregated representation of the content-relationship state $Z$.
Letting $\Rscr(z)$ be the set of all states in the same aggregation region as $z$, we estimate
the probability that a user in state $z$ streams from a recommendation of content $a$ as
\begin{align*}
P(a, z)
    &= \mathbb{E}_{\Dscr} [ \ind(Y > 0) \mid A = a, Z \in \Rscr(z) ],
\end{align*}
the probability that a user in state $z$ streams content $a$ organically as
\begin{align*}
P(C_a > 0 \mid z, \text{no rec})
    &= \mathbb{E}_{\Dscr_a} [ \ind(C > 0) \mid A = a, Z \in \Rscr(z) ],
\end{align*}
and the expected return days of a user in engagement state $z$ as
\begin{align*}
V^{(a)}(z)
    &= \mathbb{E}_{\Dscr_a} [ R \mid A = a, Z \in \Rscr(z) ],
\end{align*}
where $\mathbb{E}_{\Dscr} [ \cdot ]$ denotes the empirical average over $\Dscr$.
Given these, we estimate the probability that a user in state $z$ streams content $a$ given a recommendation as
\begin{align}
\label{eq:resurfacing-prec}
P(C_a > 0 \mid z, \text{rec})
    &= P(a, z) + [1 - P(a, z)] \times P(C_a > 0 \mid z, \text{no rec}).
\end{align}
Observe that, while learning $P(C_a > 0 \mid z, \text{rec})$ requires data from a specific recommender system of interest, we are able to take advantage of data from across the entire application to learn a model for $V$. 

In our numerical illustrations, we use the approximations 
\begin{align*}
\mathbb{P}_{\pi^0}\left( P(C_{t,a} > 0 \mid Z_{t,a}=z, A_{t,\star}=a) \right)  &\approx P(C_a > 0 \mid z, \text{rec}) \\
\mathbb{P}_{\pi^0}\left( P(C_{t,a} > 0 \mid Z_{t,a}=z, A_{t,\star}\neq a) \right)  &\approx P(C_a > 0 \mid z, \text{no rec}).
\end{align*}


\section{Reviewing theory of policy improvement: connections to coordination and attribution challenges}
\label{sec:policy_improvement_theory}

This section reviews theory of a simple and direct approach to improving a component of the recommendation policy:
fit a parametric approximation to the $Q$-function in \eqref{eq:partialQ} and then adjust the recommendation policy toward actions with higher $Q$-values. 
Our methods follow this general template, but employ domain-specific, tailored, modeling of  the $Q$-function. We highlight that, even without our domain-specific modeling, this general approach to improving a specific policy component is an attractive one when faced with the coordination and attribution challenges described in Section \ref{subsec:challenges}. (As highlighted in the body of the paper, our structural of the $Q$-function is most critical for enabling pragmatic, sample-efficient, estimation of the $Q$-function.)

We do this by reviewing and specialize theory of  \emph{actor-critic} methods \citep{sutton1999policy,konda1999actor}.

\subsection{An approximate policy improvement update via logged data}
\label{subsec:approx_pi}
We derive rigorous insights in the case where $Q$-functions are approximated using state-aggregation, a special kind of parametric approximation.
Because a user's state is represented by the exhaustive history of their interactions, it is possible that no two users share an identical state.
A successful recommender system needs to recommend items to users that were liked by users with \emph{similar} states.
How does this fit with the goal of policy improvement described in the previous section?
Here we introduce the reader to ideas of approximation and generalization in policy improvement by considering state-aggregated policies, a simple form of approximation that has been studied for decades \citep{whitt1978approximations, bean1987aggregation, singh1995reinforcement, gordon1995stable, tsitsiklis1996feature,rust1997using, li2006towards, jiang2015abstraction}.

Under a state-aggregated policy, the state space is partitioned into segments, and users whose states fall into a common segment receive the same recommendation.
Fix a rule $\phi: \Sbb \to \{1,\cdots, m\}$ which assigns each state to one of $m$ segments.
We stay agnostic to the choice of $\phi$, determining how the state-space is partitioned, but one could imagine applying a standard procedure to cluster user-vectors, like those described in Appendix \ref{sec:background}.
We say $\phi$ is \emph{non-degenerate} if there are no empty user clusters, i.e. $\mathbb{E}_{\pi^0}\left[  \sum_{t=T_0}^{T_1} \ind\{ \phi(S_t)=i \} \right]>0$ for each $i\in[m]$.

For a given non-degenerate aggregation rule $\phi:\Sbb \to [m]$, define the state-aggregated value function $\bar{Q}_{\pi^0}: [m] \times \Abb_\star$
by
\begin{equation}\label{eq:aggregated-Q}
    \bar{Q}_{\pi^0}(i, a) = \frac{\mathbb{E}_{\pi^0}\left[  \sum_{t=T_0}^{T_1}  Q_{\pi^0}(S_t, a ) \ind\{ \phi(S_t)=i \} \right] }{ \mathbb{E}_{\pi^0} \left[  \sum_{t=T_0}^{T_1} \ind\{ \phi(S_t)=i \} \right]  } \quad \forall i\in[m], a\in \Abb_{\star}.
\end{equation}
This can be thought of as a regression-based approximation to the $Q$-function.
Among all state-aggregated functions, $\bar{Q}(\cdot, a):\Sbb \to \mathbb{R}$ minimizes the mean-squared error metric $\mathbb{E}_{s\sim w}( \bar{Q}_{\pi^0}( \phi(s), a) -   Q_{\pi^0}(s,a) )^2 $ where $w(s)\propto \mathbb{E}_{\pi^0}\left[\sum_{t=T_0}^{T_1} \ind\{S_t=s\}\right]$. See Appendix \ref{subsec:state_agg_as_projection} for details.

Whereas the exact policy iteration update requires representing the value of each action at each possible state, forming the approximation in \eqref{eq:aggregated-Q} requires computing $|\Abb_{\star}| \cdot m$ averages.
Crucially, the averages are computed under the distribution of states visited by the incumbent policy, which is exactly the data we have access to in $\mathcal{D}$.
Algorithm \ref{alg:direct} shows how to build a sample based approximation to this policy improvement problem, based on the definition of the $Q$-functions in \eqref{eq:partialQ}.
\begin{algorithm}
    \caption{Direct, locally optimal, state-aggregated policy improvement}\label{alg:direct}
    \begin{algorithmic}
        \Require Aggregation rule $\phi: \Sbb \to [m]$, dataset of user trajectories $\{(S_t^u, A_t^u, Y_t^u) : t\in \{T_0^u, \cdots, T_1^u\}\}_{u \in U}$ , reward function $R$.
        \For{$i \in [m]$}
        \For{$a \in \Abb_{\star}$}
        \State $\Dscr_{i,a} = \{ (u,t) : \phi(S_t^u) = i\, ,\, A_{t,\star}^u = a \}$.
        \State $\hat{Q}^{\rm direct}\left(i, a\right)   = \frac{1}{|\Dscr_{i,a}|} \sum_{(u,t)\in \Dscr_{i,a}} \left( \sum_{\tau=t}^{T_1^u} R(A_{\tau}^u, Y_{\tau}^u)  \right)$.
        \EndFor
        \State Pick $\hat{a}_i \in \argmax_{a \in \Abb_{\star}} \hat{Q}^{\rm direct}(i, a)$
        \EndFor
        Return  $\hat{Q}^{\rm direct}$ and $(\hat{a}_1, \cdots, \hat{a}_m)$  defining a policy.
    \end{algorithmic}
\end{algorithm}

\subsection{How coordination challenges are addressed}
\label{subsec:coordination_theory}

Classical theory of dynamic programming covers policy iteration methods that optimize the true $Q$-function at every state.
This subsection interprets methods that optimize the simplified approximate $Q$-function in  \eqref{eq:aggregated-Q} as performing \emph{coordinate ascent}:
they produce a steepest ascent update to a component of the recommendation policy, within a restricted class of policies.
This is accomplished naturally by fitting $Q$-functions on data generated by the incumbent policy.

Coordinate ascent is a \emph{decentralized} method if not a \emph{coordinated} one.
It optimizes a component of the recommendation system in the context of how the rest of the system behaves.
A coordinated solution might instead change many parts of the system to attain a goal.
To appreciate this distinction, consider our podcast discovery prototypes from Section \ref{sec:prototypes}.
Those methods predict whether a user is likely to form a listening habit with a podcast show if the try it once.
Implicitly, this depends not just on the user's tastes, but on the incumbent policy of the recommender system, which influences whether subsequent recommendations resurface that show to the user.

The rest of this subsection makes the coordinate ascent interpretation formal.
First, the definition below formally defines a set of deterministic state-aggregated policies which may deviate from the incumbent policy at position $\star$ but not at other positions.
\begin{definition}[]\label{def:state-agg-policy}
    A state aggregation rule is a map $\phi: \Sbb \to [m]$. For a given aggregation rule $\phi$, take $\Pi_{\star}^{\phi}$ to be the set of all policies of the form $\pi=(\pi_\star, \pi_{\smallsetminus \star})$ such that  $\pi_{\star}(s, \epsilon_{t,\star})$ does not depend on the idiosyncratic randomness $\epsilon_{t,\star}$ and depends on $s$ only through $\phi(s)$.  Each such policy can be represented by $m$ actions $a_1, \cdots, a_m \in \Abb_{\star}$ with $\pi_{\star}(s, \epsilon_{t,\star})=a_{\phi(s)}$ almost surely for each $s\in \Sbb$.
\end{definition}

Recall the performance measure $J(\pi)=\E[V_{\pi}(S_{T_0})]$ represents the average cumulative reward across a user's lifetime interactions. For a policy $\pi' \in \Pi_{\star}^{\phi}$, take ${\rm Mix}(\pi', \pi_0,  \beta)$ to be a policy that,
independently in each period, selects the action $\pi'_{\star}(S_t, \epsilon_{t,\star})$ with probability $\beta$ and otherwise selects $\pi^0_{\star}(S_t, \epsilon_{t,\star})$. Consider the local policy improvement problem,
\begin{equation}\label{eq:aggregated-local-pi}
\pi^+ \in \argmax_{\pi \in \Pi_{\star}^{\phi}}    \, \frac{d}{d\beta} \, J\left(  \, {\rm Mix} (\pi, \pi_0,  \beta) \,   \right) \bigg\vert_{\beta=0},
\end{equation}
which aims to attain the steepest rate of improvement in lifetime reward among all state-aggregated policies.
The randomization in ${\rm Mix}(\pi', \pi_0,  \beta)$ is not an essential part of the current paper.
One should think of this a formal device for studying \emph{small} alterations of the user experience.\footnote{%
The theory of policy gradient methods clarifies when the derivative with respect to $\beta$ accurately captures also the change for moderate values of $\beta$.
The major concern is distribution shift, where altering component $\pi_{\star}$ substantially alters the fraction of users whose states fall within various clusters.
See the discussion in \cite{schulman2015trust}, or, for precise expressions, see the proof in Appendix \ref{subsec:proof_of_pg_formula}.}
The next lemma shows that $\pi^+$ is the maximizer of an approximate $Q$-function. This result is a modified version of the ``actor-critic'' form of the policy gradient theorem, first discovered by \cite{sutton1999policy} and \cite{konda1999actor}.
Theorem~8 in \cite{russo2020approximation} provides a result that is almost analogous to this one.
For completeness, we provide a proof in Appendix \ref{subsec:proof_of_pg_result}.
\begin{lemma}\label{lem:policy_grad}
    For a given non-degenerate aggregation rule $\phi:\Sbb \to [m]$, consider the state-aggregated value function $\bar{Q}_{\pi^0}: [m] \times \Abb_\star$ defined in \eqref{eq:aggregated-Q}. A policy $\pi^+\in \Pi_{\star}^{\phi}$ represented by actions $(a^{(1)}, \cdots, a^{(m)})$ solves \eqref{eq:aggregated-local-pi} if and only if
    \begin{equation}\label{eq:greedy_actions}
     a^{(i)} \in \argmax_{a \in \Abb_{\star}}  \bar{Q}_{\pi^0}(i, a)   \quad \text{for each } i\in [m].
    \end{equation}
\end{lemma}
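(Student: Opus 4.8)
The plan is to express the directional derivative $\frac{d}{d\beta} J({\rm Mix}(\pi, \pi_0, \beta))\big|_{\beta=0}$ explicitly in terms of the exact $Q$-function $Q_{\pi^0}$, then recognize that the relevant average over states collapses precisely to the state-aggregated function $\bar{Q}_{\pi^0}$ from \eqref{eq:aggregated-Q}. First I would invoke the policy gradient / performance difference machinery (the ``actor-critic'' form of the policy gradient theorem of \citet{sutton1999policy,konda1999actor}, in the variant proved in Appendix \ref{subsec:proof_of_pg_formula}). The key computation: when we perturb only the component at position $\star$, replacing $\pi^0_\star(S_t,\epsilon_{t,\star})$ by $\pi'_\star(S_t,\epsilon_{t,\star})$ with probability $\beta$ independently each period, the chain rule in $\beta$ evaluated at $0$ yields
\[
\frac{d}{d\beta} J\big({\rm Mix}(\pi,\pi_0,\beta)\big)\Big|_{\beta=0}
= \mathbb{E}_{\pi^0}\!\left[ \sum_{t=T_0}^{T_1} \Big( Q_{\pi^0}\big(S_t, \pi'_\star(S_t)\big) - Q_{\pi^0}\big(S_t, \pi^0_\star(S_t,\epsilon_{t,\star})\big) \Big) \right],
\]
since the only first-order effect of an infinitesimal $\beta$ is, at each visited state $S_t$, to swap the incumbent action for $\pi'_\star(S_t)$ and thereafter follow $\pi^0$ — and $Q_{\pi^0}(s,\cdot)$ is exactly the value of such a one-step deviation by definition \eqref{eq:partialQ}. (There is no distribution-shift term at $\beta = 0$ because the occupancy measure is continuous in $\beta$ and its derivative is weighted by the value function, giving a term of the same form; the appendix handles this carefully.)

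Next I would use the structure of $\Pi_\star^\phi$: a policy $\pi' \in \Pi_\star^\phi$ is determined by actions $a^{(1)},\dots,a^{(m)}$ with $\pi'_\star(s) = a^{(\phi(s))}$. The second (incumbent) term in the bracket does not depend on $\pi'$, so maximizing the directional derivative over $\pi' \in \Pi_\star^\phi$ is equivalent to maximizing
\[
\mathbb{E}_{\pi^0}\!\left[ \sum_{t=T_0}^{T_1} Q_{\pi^0}\big(S_t, a^{(\phi(S_t))}\big) \right]
= \sum_{i=1}^m \mathbb{E}_{\pi^0}\!\left[ \sum_{t=T_0}^{T_1} Q_{\pi^0}\big(S_t, a^{(i)}\big)\,\ind\{\phi(S_t)=i\} \right]
= \sum_{i=1}^m \bar{Q}_{\pi^0}\big(i, a^{(i)}\big)\, w_i,
\]
where $w_i := \mathbb{E}_{\pi^0}[\sum_{t=T_0}^{T_1}\ind\{\phi(S_t)=i\}] > 0$ by non-degeneracy, and the last equality is just the definition \eqref{eq:aggregated-Q} rearranged. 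Since this objective separates additively across segments $i$ and each $w_i > 0$, it is maximized if and only if $a^{(i)} \in \argmax_{a\in\Abb_\star} \bar{Q}_{\pi^0}(i,a)$ for every $i$, which is exactly \eqref{eq:greedy_actions}. Both the ``only if'' and ``if'' directions follow from this separability argument.

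The main obstacle is the first step: carefully justifying the interchange of differentiation and the (geometric, hence integrable) sum over periods, and verifying that the occupancy-measure-derivative term combines with the naive action-swap term to give the clean single expression above, rather than leaving an extra term. This is exactly the content deferred to Appendix \ref{subsec:proof_of_pg_formula}, so in the body I would cite it; the remaining steps (linearity of expectation, the definition of $\bar{Q}_{\pi^0}$, and separability of the per-segment maximization) are routine. A minor point to state explicitly is that the argmax in \eqref{eq:greedy_actions} is nonempty because $\Abb_\star$ can be taken finite (or, if not, one assumes attainment), so a maximizing $\pi^+ \in \Pi_\star^\phi$ exists.
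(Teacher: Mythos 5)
Your proposal is correct and follows essentially the same route as the paper: both rest on the actor-critic first-order expansion of $J({\rm Mix}(\pi,\pi^0,\beta))$ at $\beta=0$ under the incumbent occupancy measure (with rigor deferred to Appendix \ref{subsec:proof_of_pg_formula}), followed by the observation that the objective separates across segments with positive weights, so the maximizers are exactly the segment-wise $\argmax$ of $\bar{Q}_{\pi^0}$. The only cosmetic difference is that where the paper passes through Lemma \ref{lem:pg-formula} and the orthogonality of the projection onto state-aggregated functions, you collapse the exact $Q_{\pi^0}$ to $\bar{Q}_{\pi^0}$ by averaging directly within each segment using the definition \eqref{eq:aggregated-Q}; this is the same computation expressed in different words.
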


\subsection{How attribution challenges are addressed}
\label{subsec:attribution_theory}

Attribution challenges are also addressed coherently.
Credit for selecting action $\pi'_{\star}(S_{t}, \epsilon_t)$ rather than  the prescribed action $\pi^0_{\star}(S_{t}, \epsilon_t)$ is assigned based on the difference in fitted $Q$-values under those actions.
This difference is often called an \emph{advantage} in RL.
The next lemma shows that, in expectation, the cumulative sum of credited values equals the true difference in long-term value, up to a second order term.
This property is similar to what \cite{singal2022shapley} call \emph{counterfactual efficiency} in their axiomatic approach to coherent attribution.
The term that is $O(\beta^2)$ depends on the degree of distribution shift and is described explicitly in Appendix \ref{subsec:proof_of_pg_formula}.
\begin{lemma}[Specialization of the policy gradient theorem]\label{lem:pg-formula} For a policy $\pi' \in \Pi^\phi_{\star}$,
 \[
 \underbrace{J( {\rm Mix}(\pi', \pi^0, \beta) ) - J(\pi^0)}_{\text{Change in long-term value}}
 = \beta \mathbb{E}_{\pi^0}\left[ \sum_{t=T_0}^{T_1}  \underbrace{\left( \bar{Q}_{\pi^0}\left( \phi\left(S_{t}\right) ,   \pi'_{\star}(S_{t}, \epsilon_{t,\star}) \right) - \bar{Q}_{\pi^0}\left( \phi\left(S_{t}\right) ,   \pi^0_{\star}(S_{t}, \epsilon_{t,\star}) \right)   \right)}_{ \text{Estimated advantage of } \pi' \text{ over } \pi^0 \text{ at } S_t } \right] + O(\beta^2).
 \]
\end{lemma}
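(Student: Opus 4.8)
The plan is to derive this as a consequence of the policy gradient / performance difference lemma specialized to the state-aggregated policy class, combined with the characterization of $\bar{Q}_{\pi^0}$ in \eqref{eq:aggregated-Q}. First I would invoke the exact performance difference lemma: for any two policies $\pi$ and $\pi^0$, writing $\mu_{\pi}$ for the (unnormalized, discounted) occupancy measure over states induced by $\pi$ when a fresh user is drawn, one has
\[
J(\pi) - J(\pi^0) = \mathbb{E}_{\pi}\!\left[ \sum_{t=T_0}^{T_1} A_{\pi^0}\big(S_t, A_{t,\star}\big)\right],
\]
where $A_{\pi^0}(s,a) = Q_{\pi^0}(s,a) - \mathbb{E}_{\pi^0}[Q_{\pi^0}(s,\pi^0_\star(s,\epsilon))]$ is the advantage of the position-$\star$ action relative to the incumbent (and the advantage at positions $\ell \neq \star$ vanishes because $\pi$ and $\pi^0$ agree there). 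Applying this with $\pi = {\rm Mix}(\pi', \pi^0, \beta)$ and using that, at each visited state, the mixture picks $\pi'_\star(S_t,\epsilon_{t,\star})$ with probability $\beta$ and $\pi^0_\star(S_t,\epsilon_{t,\star})$ otherwise (whose advantage averages to zero), the inner term becomes $\beta\big(Q_{\pi^0}(S_t, \pi'_\star(S_t,\epsilon_{t,\star})) - Q_{\pi^0}(S_t, \pi^0_\star(S_t,\epsilon_{t,\star}))\big)$ in expectation.

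Next I would handle the distribution shift. The expectation above is under the occupancy measure of ${\rm Mix}(\pi',\pi^0,\beta)$, not under $\pi^0$. Since the mixture coincides with $\pi^0$ except with per-period probability $\beta$, the two occupancy measures differ by $O(\beta)$ in total variation, and the integrand is already multiplied by $\beta$, so replacing $\mathbb{E}_{{\rm Mix}}$ by $\mathbb{E}_{\pi^0}$ introduces only an $O(\beta^2)$ error. (This is exactly the remark in the footnote about distribution shift; the explicit form of the $O(\beta^2)$ term would be deferred to Appendix \ref{subsec:proof_of_pg_formula}.) This yields
\[
J({\rm Mix}(\pi',\pi^0,\beta)) - J(\pi^0) = \beta\,\mathbb{E}_{\pi^0}\!\left[\sum_{t=T_0}^{T_1}\!\Big(Q_{\pi^0}(S_t,\pi'_\star(S_t,\epsilon_{t,\star})) - Q_{\pi^0}(S_t,\pi^0_\star(S_t,\epsilon_{t,\star}))\Big)\right] + O(\beta^2).
\]

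The final step replaces the true $Q_{\pi^0}$ by the state-aggregated $\bar{Q}_{\pi^0}$. Here I would use the key projection property: because $\pi' \in \Pi^\phi_\star$ is state-aggregated, $\pi'_\star(S_t,\epsilon_{t,\star}) = a_{\phi(S_t)}$ depends on $S_t$ only through $\phi(S_t)$; conditioning on the event $\{\phi(S_t) = i\}$ inside the occupancy-weighted sum and using the definition \eqref{eq:aggregated-Q} of $\bar{Q}_{\pi^0}(i,a)$ as precisely the $\pi^0$-occupancy-weighted average of $Q_{\pi^0}(S_t,a)$ over that cluster, we get $\mathbb{E}_{\pi^0}[\sum_t Q_{\pi^0}(S_t, a_{\phi(S_t)})\,] = \mathbb{E}_{\pi^0}[\sum_t \bar{Q}_{\pi^0}(\phi(S_t), a_{\phi(S_t)})]$, and likewise for the incumbent action (which, fixed a cluster, is still a random action, but averaging $Q_{\pi^0}$ over the incumbent's action distribution within a cluster is exactly what averaging $\bar{Q}_{\pi^0}$ does after one more application of the tower property — or one can simply note $\mathbb{E}_{\pi^0}[\sum_t Q_{\pi^0}(S_t,\pi^0_\star(S_t,\epsilon))\ind\{\phi(S_t)=i\}] = \mathbb{E}_{\pi^0}[\sum_t \bar{Q}_{\pi^0}(i,\pi^0_\star(S_t,\epsilon))\ind\{\phi(S_t)=i\}]$ is not literally true pointwise but holds in aggregate; I would be careful here). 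Substituting gives the claimed identity. The main obstacle I anticipate is the bookkeeping in this last step: making the swap $Q_{\pi^0} \rightsquigarrow \bar{Q}_{\pi^0}$ rigorous for the incumbent term requires exploiting that $\bar{Q}_{\pi^0}$ is the orthogonal projection of $Q_{\pi^0}$ onto $\phi$-measurable functions with respect to the $\pi^0$-occupancy inner product (Appendix \ref{subsec:state_agg_as_projection}), so that inner products of $Q_{\pi^0}$ against any $\phi$-measurable quantity — in particular indicators weighted by the advantage structure — are unchanged when $Q_{\pi^0}$ is replaced by $\bar{Q}_{\pi^0}$; I would state this projection fact cleanly first and then apply it twice.
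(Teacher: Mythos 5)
Your plan is correct and follows essentially the same route as the paper's proof in Appendix \ref{subsec:proof_of_pg_formula}: the performance-difference lemma specialized to deviations at position $\star$ only, the mixture producing the factor $\beta$ with the advantage form, the $O(\beta)$ total-variation shift between the occupancy measures of ${\rm Mix}(\pi',\pi^0,\beta)$ and $\pi^0$ absorbing into the $O(\beta^2)$ term, and the swap of $Q_{\pi^0}$ for $\bar{Q}_{\pi^0}$ via orthogonality of the projection onto $\mathbb{Q}_\phi$ under the $\pi^0$-occupancy inner product. The incumbent-action subtlety you flag ("I would be careful here") is handled in the paper by exactly the projection argument you propose, under the premise that $\pi^0_\star$ (not just $\pi'_\star$) lies in $\mathbb{Q}_\phi$, i.e.\ its action distribution is $\phi$-measurable.
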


\subsection{Proofs of gradient results}

For completeness, this section provides an independent derivation of results on actor-critic algorithms state in Lemmas \ref{lem:policy_grad} and \ref{lem:pg-formula}.
The section needs to be read linearly.
Each subsection provides new notations and results that are used in subsequent ones.

\subsubsection{State occupancy measure}

Define the state occupancy measure under policy $\pi$ by
\[
\eta_{\pi}\left( \tilde{\Sbb} \right) = c  \cdot\mathbb{E}_{\pi}\left[ \sum_{t=T_0}^{T_1} \ind \left( S_t \in \tilde{\Sbb} \right)\right] \qquad \forall  \, \tilde{\Sbb} \subset \mathbb{\Sbb}
\]
where $c=1/(1-\gamma)$ is the appropriate normalizing constant which ensures $\eta_{\pi}(\Sbb)=1$.

\subsubsection{Interpretation of state-aggregation as orthogonal projection}\label{subsec:state_agg_as_projection}
Define a weighted the inner product on $\mathbb{R}^{\Sbb \times |\Abb_{\star}| }$ by
\[
\langle Q,  Q'\rangle_{\pi} = \sum_{a\in \Abb_{\star}}\E_{s\sim \eta_{\pi}}\left[ Q(s,a) Q'(s,a)\right].
\]
Take $\| Q \|_{\pi} = \sqrt{ \langle Q,  Q\rangle_{\pi} }$ to be the associated Euclidean norm.
Define the subspace of state aggregated $Q$ functions by
\[
\mathbb{Q}_{\phi} = \{ Q :\Sbb\times \Abb_{\star} \to \mathbb{R} \mid  Q(s,a)=Q(s',a)  \quad  \forall s,s' \text{such that }   \phi(s) = \phi(s')  \}.
\]
The projected state aggregated $Q$-function
\[
\hat{Q}_{\pi^0}^{\phi} = \argmin_{Q \in \mathbb{Q}_{\phi}} \| Q - Q_{\pi^0}\|_{\pi^0},
\]
satisfies
\[
\hat{Q}_{\pi^0}(s, a) = \mathbb{E}_{s' \sim \eta_{\pi^0}}\left[ Q_{\pi^{0}}(s',a) \mid \phi(s') = i \right]  \equiv \bar{Q}_{\pi^0}(i,a)  \qquad \forall i \in [m], a\in \Abb_{\star}.
\]
In words, predicting the conditional mean minimizes mean-squared error. The notation $\bar{Q}_{\pi^0}(i,a)$  is defined in \eqref{eq:aggregated-Q}. It encodes the same values as the projected $Q$-function, $\hat{Q}_{\pi^0}$, but takes as input the index of a state cluster rather than a state.

\subsubsection{A specialized performance difference lemma}
\label{subsec:perf_diff}

We state a variant of the performance difference lemma \citep{kakade2002approximately} that applies when only a single component of the policy is changed.
Define
\[
Q_{\pi^0}^{\rm Full}(s, [a_{1},\ldots, a_{L}]) =  \mathbb{E}_{\pi^0}\left[  \sum_{t=0}^{T_1}  R(Y_t, A_t)  \mid S_0 = s, A_{0} =[a_{1},\ldots a_L] \right]
\]
and recall
\[
Q_{\pi^0}(s, a)  = \mathbb{E}_{\pi^0}\left[  \sum_{t=0}^{T_1}  R(Y_t, A_t)  \mid S_0 = s, A_{0,\star} =a\right].
\]
The performance difference lemma states
\[
J(\pi) - J(\pi_0) = \E\left[ Q^{\rm Full}_{\pi^0}\left(S, \pi(S, \epsilon_t) \right) -  Q^{\rm Full}_{\pi^0}\left(S, \pi^0(S, \epsilon_t) \right)  \right] \qquad S\sim \eta_{\pi}
\]
where the expectation is over $S$ drawn from the occupancy measure $\eta_{\pi}$ and the action noise $\epsilon_t$. A policy $\pi \in \Pi_{\star}^{\phi}$ can be written as $\pi=(\pi_{\star}, \pi_2^0, \ldots, \pi_L^0)$, differing from the incumbent policy $\pi^0$ only on component $\star$.
(See Definition \ref{def:state-agg-policy}.)
We find that for all $\pi \in \Pi_{\star}^{\phi}$,
\begin{align*}
J(\pi) - J(\pi_0) &= \E\left[ Q^{\rm Full}_{\pi^0}\left(S, [\pi_{\star}(S, \epsilon_{t,\star}), \pi_{\setminus \star}(S, \epsilon_{t,\setminus \star})] \right) -  Q^{\rm Full}_{\pi^0}\left(S, [\pi^0_{\star}(S, \epsilon_{t,\star}), \pi^0_{\setminus \star}(S, \epsilon_{t,\setminus \star})]\right)  \right] \\
&=\E\left[ Q^{\rm Full}_{\pi^0}\left(S, [\pi_{\star}(S, \epsilon_{t,\star}), \pi^0_{\setminus \star}(S, \epsilon_{t,\setminus \star})] \right) -  Q^{\rm Full}_{\pi^0}\left(S, [\pi^0_{\star}(S, \epsilon_{t,\star}), \pi^0_{\setminus \star}(S, \epsilon_{t,\setminus \star})]\right)  \right] \\
&= \E\left[ Q_{\pi^0}\left(S, \pi_{\star}(S, \epsilon_{t,\star}) \right) -  Q_{\pi^0}\left(S, \pi^0_{\star}(S, \epsilon_{t,\star}) \right)  \right]
\end{align*}
where $S\sim \eta_{\pi}$ is assumed to be drawn independently of $\epsilon_t$. The second equality above uses that policies are state aggregated. The first and third apply the definitions of $Q^{\rm Full}_{\pi^0}$ and $Q_{\pi^0}$.
With some abuse of notation, we now interpret $\pi_{\star}(s, \cdot)$ as specifying a probability vector $\left(\pi_{\star}(s,a) : a \in\Abb_{\star}\right)$ with elements
\[
\pi_{\star}\left(s,a \right) = \Prob( \pi_{\star}(s,\epsilon_{t,\star})=a).
\]
The performance difference lemma can be rewritten as
\begin{equation}\label{eq:perf-diff}
J(\pi) - J(\pi^0) = \E_{S \sim \eta_\pi}\left[ \sum_{a \in \Abb_\star} Q_{\pi^0}\left(S, a \right)\left( \pi_{\star}(S,a)- \pi^0_{\star}(S,a) \right)   \right]  = \langle Q_{\pi^0} \, ,\, \pi_{\star} - \pi^0_{\star} \rangle_{\pi}
\end{equation}
for all $\pi \in \Pi_{\star}^{\phi}$. The inner product was defined in Subsection \ref{subsec:state_agg_as_projection}.

\subsubsection{Proof of Lemma \ref{lem:pg-formula}}\label{subsec:proof_of_pg_formula}
The result we wish to show is an actor-critic form of the policy gradient theorem, similar to the one derived by \citet{sutton1999policy} and \citet{konda1999actor}.
The result here is in a slightly different form however, which is specialized to the case of state-aggregated policies a particular (so called ``direct'') parameterization.
\citet{russo2020approximation} establishes almost the same result in the case where the state space is finite.

\begin{proof}[Proof of Lemma \ref{lem:pg-formula}]
Consider the randomized mixture policy $\pi^\beta = {\rm Mix} (\pi, \pi_0,  \beta)$, which can be written $\pi^\beta = \beta \pi +  (1-\beta) \pi^0$.
Then, by \eqref{eq:perf-diff},
\begin{align*}
J(\pi^\beta) - J(\pi^0) =& \beta  \cdot \langle Q_{\pi^0} \, ,\, \pi_{\star} - \pi^0_{\star} \rangle_{\pi^{\beta}} = \beta \cdot  \langle Q_{\pi^0} \,, \, \pi_{\star} - \pi^0_\star \rangle_{\pi^0} + \mathcal{E}(\pi^0, \pi^\beta),
\end{align*}
where the remainder term is defined by
\[
\mathcal{E}(\pi^0, \pi^\beta) =  \beta \left[  \langle Q_{\pi^0} \, ,\, \pi_{\star} - \pi^0_{\star} \rangle_{\pi^{\beta}}  - \langle Q_{\pi^0} \,, \, \pi_{\star} - \pi^0_\star \rangle_{\pi^0} \right] = O(\beta^2).
\]
We return later to justify that this term is $O(\beta^2)$.

Finally, we conclude
\begin{align*}
J(\pi^\beta) - J(\pi^0) &=   \beta \cdot  \langle Q_{\pi^0} \,, \, \pi_{\star} - \pi^0_\star \rangle_{\pi^0} +  \mathcal{E}(\pi^0, \pi^\beta) \\
&=\beta \cdot  \langle \hat{Q}^{\phi}_{\pi^0} \,, \, \pi_{\star} - \pi^0_\star \rangle_{\pi^0}  +  \beta \cdot \underbrace{\langle  Q_{\pi^0}-\hat{Q}^{\phi}_{\pi^0} \,, \, \pi_{\star}- \pi^0_\star \rangle_{\pi^0}}_{=0}   + \underbrace{\mathcal{E}(\pi^0, \pi^\beta)}_{O(\beta^2)},
\end{align*}
where the $\langle  Q_{\pi^0}-\hat{Q}^{\phi}_{\pi^0} \,, \, \pi_{\star}- \pi^0_\star \rangle_{\pi^0}=0$ follows the basic optimality conditions of orthogonal projection: the error vector $Q_{\pi^0}-\hat{Q}^{\phi}_{\pi^0}$ is orthogonal to the subspace $\mathbb{Q}_{\phi}$ in the inner product $\langle \cdot \, , \, \cdot \rangle_{\pi^0}$. The fact follows since the policies are state aggregated, meaning $\pi_{\star},\pi^0_\star \in \mathbb{Q}_{\phi}$.
This establishes Lemma \ref{lem:pg-formula}, albeit in different notation.

Let us return to sketch a proof that $\mathcal{E}(\pi^0, \pi^\beta) =O(\beta^2)$. To simplify, use the more abstract notation $Q= Q_{\pi^0}$ and $Q'= \pi_{\star} - \pi^0_\star$. Observe that both are \emph{bounded} functions mapping $\Sbb \times \Abb_{\star}$ to real numbers. Then
\begin{align*}
    \left|\mathcal{E}(\pi^0, \pi^\beta) \right|&= \beta \left|\langle Q\, ,\, Q' \rangle_{\pi^{\beta}}  - \langle Q \,, \, Q' \rangle_{\pi^0} \right| \\
    &= \beta \left| \sum_{a \in \Abb_\star} \left[  \mathbb{E}_{s\sim \eta_{\pi^\beta}}[ Q(s,a)Q'(s,a) ]  -  \mathbb{E}_{s\sim \eta_{\pi^\beta}}[ Q(s,a)Q'(s,a) ]    \right] \right|  \\
    & \leq \beta \|Q\|_{\infty}\|Q'\|_{\infty} |\Abb_\star| \delta( \eta_{\pi^\beta}, \eta_{\pi^0} )
\end{align*}
where $\delta(\cdot\, ,\, \cdot)$ denotes total variation distance. It is not difficult to show that $\delta( \eta_{\pi^\beta}, \eta_{\pi^0} ) = O(\beta)$. We only sketch the argument rather than develop new notation required to show things in formal math. Independently in each period, $\pi^\beta$ follows the action prescribed by $\pi^0$ with probability $1-\beta$. With probability $\beta$, it may select a different action. The probability $\pi^{\beta}$ ever prescribes a different action than $\pi^0$  during an episode is bounded by  the expected \emph{number} of times it prescribes a different action during an episode. The latter is equal to $\beta \E\left[ T_1 -T_0 \right] = O(\beta)$. Since, the probability $\pi^{\beta}$ ever prescribes a different action from $\pi^0$ is only $O(\beta)$, state trajectories under $\pi^\beta$ are identical to those under $\pi^0$ except with probability that is $O(\beta)$.
\end{proof}

\subsubsection{Proof of Lemma \ref{lem:policy_grad}}\label{subsec:proof_of_pg_result}
This result follows from Lemma \ref{lem:pg-formula} and some rewriting. We have
\begin{align*}
    J(\pi^\beta) - J(\pi^0) &= \beta \cdot  \langle \hat{Q}^{\phi}_{\pi^0} \,, \, \pi_{\star} - \pi^0_\star \rangle_{\pi^0}  + O(\beta^2) \\
    &= \beta \cdot \langle \hat{Q}^{\phi}_{\pi^0} \,, \, \pi_{\star} \rangle_{\pi^0} + \underbrace{\beta \cdot \langle \hat{Q}^{\phi}_{\pi^0} \,, \, \pi_{\star}^0 \rangle_{\pi^0}}_{\text{indep. of } \pi_\star} + O(\beta^2)
\end{align*}
Now, take $(a_1, \ldots a_m)$ to be the actions defining the state-aggregated policy $\pi_{\star}$, as in Definition \ref{def:state-agg-policy}. For any state $i$ in the $i^{\rm th}$ cluster (i.e. $\phi(s)=1$), one has $\hat{Q}^\phi_{\pi^0}(s,a) = \bar{Q}_{\pi^0}(i,a)$ and $\pi_{\star}(s,a_i)=1$, implying
\[
\langle \hat{Q}^{\phi}_{\pi^0} \,, \, \pi_{\star} \rangle_{\pi^0} = \beta \sum_{i=1}^{m} \eta_{\pi^0}(\{ s: \phi(s)=i \})  \bar{Q}_{\pi^0}(i, a_i).
\]
Therefore,
\[
\frac{d}{d\beta} J(\pi^\beta) \bigg\vert_{\beta=0}=  \sum_{i=1}^{m} \eta_{\pi^0}(\{ s: \phi(s)=i \})  \bar{Q}_{\pi^0}(i, a_i) + {\rm const}
\]
where ${\rm const}$ is independent of the choice of $(a_1, \cdots, a_m)$.
The steepest ascent direction in the space of state-aggregated policies is defined by the $m$ actions that maximize the aggregated $Q$-values as in \eqref{eq:greedy_actions}.

\end{document}